\newcommand{\first}[1]{\textbf{\textcolor{red}{#1}}}
\newcommand{\second}[1]{\textbf{\textcolor{violet}{#1}}}
\newcommand{\third}[1]{\textbf{\textcolor{black}{#1}}}
\def\eqref#1{equation~\ref{#1}}
\def\Eqref#1{Equation~\ref{#1}}
\def\1{\bm{1}}
\DeclareMathAlphabet{\mathsfit}{\encodingdefault}{\sfdefault}{m}{sl}
\SetMathAlphabet{\mathsfit}{bold}{\encodingdefault}{\sfdefault}{bx}{n}
\newcommand{\bfI}{{\bf I}}
\newcommand{\bfS}{{\bf S}}
\newcommand{\bfy}{{\bf y}}
\newcommand{\bfu}{{\bf u}}
\newcommand{\bfq}{{\bf q}}
\newcommand{\bfp}{{\bf p}}
\newcommand{\bfv}{{\bf v}}
\newcommand{\bfz}{{\bf z}}
\newcommand{\bfepsilon}{{\boldsymbol \epsilon}}
\newcommand{\bftheta}{{\boldsymbol \theta}}
\newtheorem{theorem}{Theorem}[section]
\newtheorem{lemma}[theorem]{Lemma}
\newtheorem{definition}{Definition}
\newtheorem{example}{Example}
\newtheorem{comment}{Comment}
\newtheorem{assumption}{Assumption}
\title{Fast and Flexible Probabilistic Forecasting of Dynamical Systems\\ using Flow Matching and Physical Perturbation}
\author[1]{\href{mailto:<siddharth.rout@ubc.ca>?Subject=Your UAI 2026 paper}{Siddharth~Rout}{}}
\author[1]{Eldad~Haber}
\author[2]{ St\'ephane~Gaudreault}
\affil[1]{%
    University of British Columbia\\
    Vancouver, BC, Canada
}
\affil[2]{%
    Environnement et Changement climatique Canada \\
    Dorval, QC, Canada
}
\begin{document}
\maketitle

\begin{abstract}
Learning dynamical systems from incomplete or noisy data is inherently ill-posed, as a single observation may correspond to multiple plausible futures. While physics-based ensemble forecasting relies on perturbing initial states to capture uncertainty, standard Gaussian or uniform perturbations often yield unphysical initial states in high-dimensional systems. Existing machine learning approaches address this via diffusion models, which rely on inference via computationally expensive stochastic differential equations (SDEs). We introduce a novel framework that decouples perturbation generation from propagation. First, we propose a flow matching-based generative approach to learn physically consistent perturbations of the initial conditions, avoiding artifacts caused by Gaussian noise. Second, we employ deterministic flow matching models with Ordinary Differential Equation (ODE) integrators for efficient ensemble propagation with fewer integration steps. We validate our method on nonlinear dynamical system benchmarks, including the Lotka-Volterra Predator-Prey system, MovingMNIST, and high-dimensional WeatherBench data (5.625$^\circ$). Our approach achieves state-of-the-art probabilistic scoring, as measured by the Continuous Ranked Probability Score (CRPS), and physical consistency, while offering significantly faster inference than diffusion-based baselines. 
\end{abstract}

\section{Introduction}\label{sec:intro}

Learning dynamical systems is of paramount importance across scientific and engineering disciplines, ranging from numerical weather prediction (NWP) to finance and biology \citep{brin2002introduction, tu2012dynamical}. Traditionally, predicting the behaviour of these systems relies on explicit mathematical models derived from physical laws. However, these classical approaches are often constrained by computational limits \citep{moin1998direct, wedi2015modelling} and the inherent complexity of chaotic behavior \citep{guckenheimer2013nonlinear}. Crucially, when data is missing, noisy, or sparse, this inverse problem becomes ill-posed: a single observation may correspond to a distribution of plausible future states rather than a unique deterministic outcome. In such scenarios, probabilistic forecasting is essential for robust decision-making \citep{reich2015probabilistic}.

Machine learning (ML) has emerged as a powerful alternative for learning complex spatiotemporal patterns without requiring explicit physical equations \citep{Ghadami2022}. Successes in high-resolution weather forecasting, such as FourCastNet \citep{pathak2022fourcastnet}, GraphCast \citep{graphcast2023}, and Aurora \citep{bodnar2024aurora}, demonstrate that deep learning can rival traditional numerical solvers in accuracy while significantly reducing inference time.

Despite these advances, most deep learning architectures (ranging from classical RNNs and LSTMs to modern Vision Transformers \citep{liu2021swin, bi2023pangu}) are fundamentally deterministic, mapping a point input to a point output. While effective for closed systems with perfect data, these models fail to capture the intrinsic uncertainty of open systems. To address this, recent works have adapted diffusion models and stochastic interpolants to learn mappings from a point to a distribution \citep{price2025probabilistic, chen2024probabilistic}. These methods typically rely on learning high-dimensional Stochastic Differential Equations (SDEs) during inference. Although mathematically rigorous, SDE-based sampling is computationally expensive and scales poorly to real-time applications due to high training and inference costs. 

\paragraph{Motivation.} 
To mitigate the computational cost of SDEs, we draw inspiration from classical ensemble forecasting \citep{kalnay2003atmospheric}. Instead of learning a stochastic evolution (point-to-distribution) via SDEs, we reformulate the problem as learning a deterministic transport map between distributions (distribution-to-distribution). This reformulation enables us to use Ordinary Differential Equations (ODEs) rather than SDEs, which offers faster and more accurate numerical integration \citep{liu2022flow}.

However, ensemble forecasting introduces a secondary challenge: generating a representative ensemble of initial conditions. The state space of a dynamical system is often a complex, lower-dimensional manifold embedded in a high-dimensional space. Standard techniques (such as adding Gaussian or uniform noise to the input) often produce "unphysical" states that lie off this manifold \citep{kalnay2003atmospheric}. Propagating unphysical initial conditions leads to model drift and unreliable uncertainty estimates.

We propose a unified framework that addresses both computational efficiency and physical consistency challenges. We exploit a continuous normalizing flow \cite[CNF,][]{chen2018neural} based invertible generative technique called Flow Matching \cite[FM,][]{lipman2022flow} to learn the manifold of valid states and sample physically consistent perturbations. These perturbations are then propagated using a deterministic ODE-based flow model.

\subsection{Contributions}
\begin{itemize}
    \item \textbf{Generative perturbation for physically meaningful sampling:} We propose a generative variant of flow matching to perturb high-dimensional complex dynamical states. Unlike Gaussian noise, our method ensures perturbations remain on the data manifold, preserving physical consistency.
    \item \textbf{Efficient uncertainty propagation:} By formulating the forecasting problem as a distribution-to-distribution mapping, we employ ODE-based flow matching. This approach replaces computationally expensive SDEs, enabling faster training and inference while decoupling stochasticity from the dynamics.
    \item \textbf{Flexible uncertainty quantification:} Our approach decouples the perturbation step from the forecasting step, allowing flexible specification of when uncertainty is introduced. The forecasting and uncertainty models can be used either independently or in combination with traditional, state-of-the-art, or physics-based approaches. See \Cref{fig:flexibleUQ} for a graphical abstract.  
    \item \textbf{Empirical validation on diverse benchmarks:} We demonstrate state-of-the-art performance on complex coupled nonlinear systems (Predator-Prey), video prediction (MovingMNIST), and high-dimensional climate modelling (WeatherBench), showing improved uncertainty quantification Continuous Ranked Probability Score (CRPS) over diffusion-based baselines.
\end{itemize}

\begin{figure}[t]
    \centering
    \includegraphics[width=\linewidth]{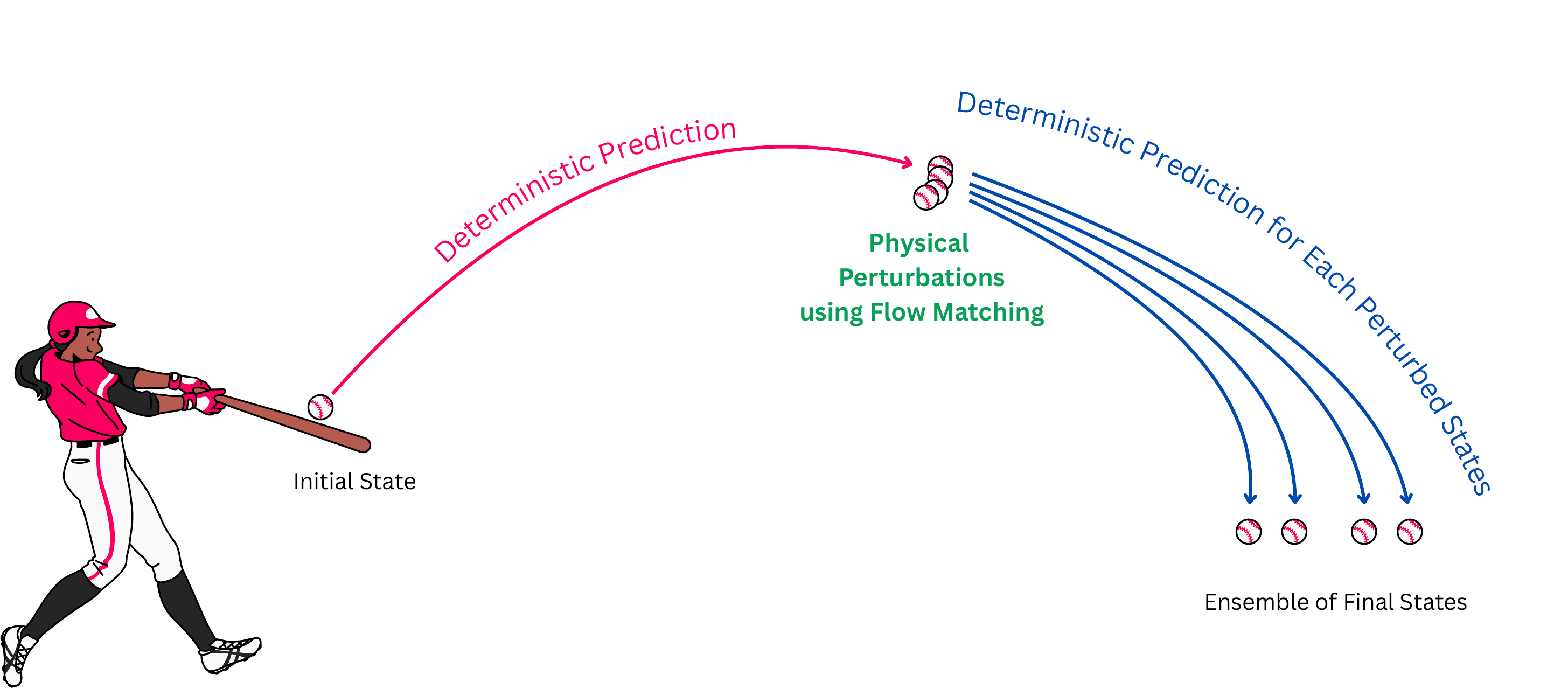}
    \caption{A cartoonified graphical abstract of our proposition in the context of probabilistic forecasting of baseball trajectory after a hit using our technique. Notably, the physical perturbation is generated independently from the deterministic forecasting process.}
    \label{fig:flexibleUQ}
\end{figure}

\section{Mathematical Foundation}

Consider a dynamical system of the form
\begin{eqnarray}
    \label{dynsym}
    \dot \bfy = f(\bfy, t, \bfp).
\end{eqnarray}
Here, $\bfy \in \mathbb{R}^n$ is the state vector and $f$ is a function that depends on state $\bfy$, time $t$, and parameters $\bfp \in \mathbb{R}^k$.
We assume that $f$ is sufficiently smooth and differentiable so that, given an initial condition $\bfy(0)$, one can compute $\bfy(T)$ using some numerical integration method \citep{stuart1998dynamical}. Our goal is to learn the function $f$ given observations $\bfy(t)$. Several approaches can accomplish this \citep{reich2015probabilistic, chattopadhyay2020data, Ghadami2022}. 

We assume $\bfy$ is measured at constant time intervals $T$. In such cases,
\begin{eqnarray}
    \label{eq:dynsymT}
    \bfy(t+T) = \mathcal{F}(\bfy(t), t, \bfp) = \bfy(t) + \int_t^{t+T} f(\bfy(\tau), \tau, \bfp) d\tau.
\end{eqnarray}
Here, $\mathcal{F}$ is the function that integrates the ODE from time $t$ to $t+T$. We can then approximate $\mathcal{F}$ using a sufficiently expressive parameterised function approximator $g$ and solving the optimisation problem
\begin{eqnarray}
    \label{eq:fd}
   \min_{\bftheta}\ \sum_j \left\| g\left(\bfy_j, t_j, \bftheta \right) -  \bfy_{j+1} \right\|_2,
\end{eqnarray}
where $\bftheta$ are learnable parameters in $g$. Note that $g$ is only a surrogate for $\mathcal{F}$, and the parameters $\bftheta$ differ from those of the true dynamical system, $\bfp$. The above scenario falls under the case where we can predict future states from current values of $\bfy$, independent of the time interval. 

\begin{definition}{\bf Closed System.} \label{def1}
Let ${\cal Y}$ be the space of observed data $\bfy$ from a dynamical system ${\cal D}$. We define the system as closed if, given $\bfy(t)$, we can uniquely recover $\bfy(t+T)$ for all $t$ and finite, bounded $T\le\tau$, where $\tau$ is a constant. 
In practice, given the data $\bfy(t)$ and a tolerance $\epsilon > 0$, we can estimate a function $\mathcal{F}$ such that
\begin{eqnarray}
    \label{eq:close}
    \|\bfy(t+T) - \mathcal{F}(\bfy(t), t, \bfp) \|^2 \le \epsilon ~.
\end{eqnarray}
\end{definition}

Definition \ref{def1} implies that we can learn the function $F$ in \Eqref{eq:dynsymT} assuming that we have a sufficient amount of data in sufficient accuracy and an expressive enough neural architecture to approximate $\mathcal{F}(\cdot, \cdot, \cdot)$. In this case, the focus of any ML-based method should be devoted to the appropriate architecture that can express $\mathcal {F}$, perhaps using some of the known structure of the problem.

\begin{definition} \label{def2}{\bf Open System.}
Let ${\cal Y}$ be the space of some observed data $\bfy$ on a dynamical system ${\cal D}$. We say the system is open if, given $\bfy(t)$, we {\bf cannot} uniquely recover $\bfy(t+T)$ for all $t$  and finite $T\le\tau$, where $\tau$ is some constant. That is, there is a constant $\epsilon$ such that
\begin{eqnarray}
\label{eq:open}
   \|\bfy(t+T) - \mathcal{F}(\bfy(t), t, \bfp) \| \ge \epsilon~,
\end{eqnarray}
regardless of data quantity or the complexity of $\mathcal{F}$.
\end{definition}

This concept and its limitations are illustrated using the predator-prey model. This example demonstrates how a seemingly simple dynamical system can exhibit complex behaviour through the interaction of just two interdependent variables. While theoretically deterministic, the system can become practically unpredictable if measured in regions where the predator-prey cycle trajectories are very close; in other words, future states are highly sensitive to certain initial states.

\begin{example} {\bf Predator-Prey Model :}
The predator-prey model \citep[Chapter 3]{murray2007mathematical} is
\begin{eqnarray}
    \label{eq:pp}
{\frac {d\bfy_1}{dt}} = \bfp_1\bfy_1  - \bfp_2 \bfy_1 \bfy_2 \quad {\frac {d\bfy_2}{dt}} = \bfp_3 \bfy_1 \bfy_2  -\bfp_4 \bfy_2
 \end{eqnarray}

\begin{figure}[t]
        \centering
        \includegraphics[width=0.8\linewidth]{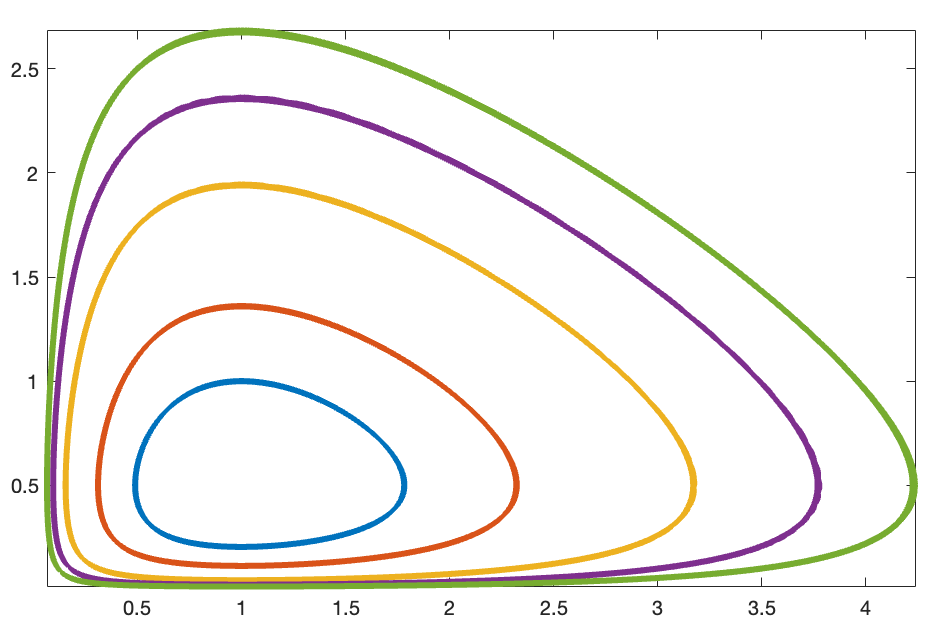}
        \caption{Temporal trajectories for the predator-prey model. Note that the trajectories approach each other closely but do not intersect.}
        \label{fig:volot}
        %\vspace*{-0.5cm}
\end{figure}

The temporal trajectories for different initial conditions $\bfy(0)$ and a given physical parameter $\bfp$ are shown in \Cref{fig:volot}, where each initial condition produces a cyclic trajectory shown in a single color. Assuming we record data with sufficient accuracy, the trajectory at any time can be determined by the measurement point at earlier times, thus justifying the model in \Eqref{eq:dynsymT}. Furthermore, since the system is periodic, it is straightforward to formulate a learning problem that can integrate the system over long time periods.

Note that trajectories cluster near the lower-left corner. While trajectories never intersect in theory, they approach one another closely in this region. Consequently, numerical solutions with noisy data may exhibit bifurcation-like behavior near this point. The system is therefore closed if the data $\bfy$ is accurate, yielding the final state as a specific point on the same trajectory. However, the system is open if noise in the data causes later times to be significantly influenced by uncertainties at earlier times, yielding a broad probability distribution of possible final states, see \Cref{ex2}.
\end{example}

\begin{example} \label{ex2} {\bf Predator-Prey Model with Partial or Noisy Data:}
Consider the predator-prey model again, but this time assume we observe only $\bfy_1$ (partial data). Now assume that we know that at $t=0$, $\bfy_1=1$ and our goal is to predict the solution at time $t=200$, which is very far into the future. This is computationally intractable. However, we can perform many simulations where $\bfy_1 = 1$ and $\bfy_2 = \pi(\bfy_2)$ where $\pi$ is some probability density function used to sample $\bfy_2$. For instance, let $\pi(\bfy_2) = U(0,1)$. The results in \Cref{fig:ts} (left) demonstrate that the solutions are not unique. The collection of points obtained at $t = 200$ can be interpreted as samples from a probability density function, which lies on a curve in state space. Our goal is now transformed from learning the function $F(\bfy,t)$ in \Eqref{eq:dynsymT} to learn a probability density function
$\pi(\bfy(T))$.

\begin{figure}[t]
        \centering
        \begin{tabular}{cc}
        \includegraphics[width=0.47\linewidth]{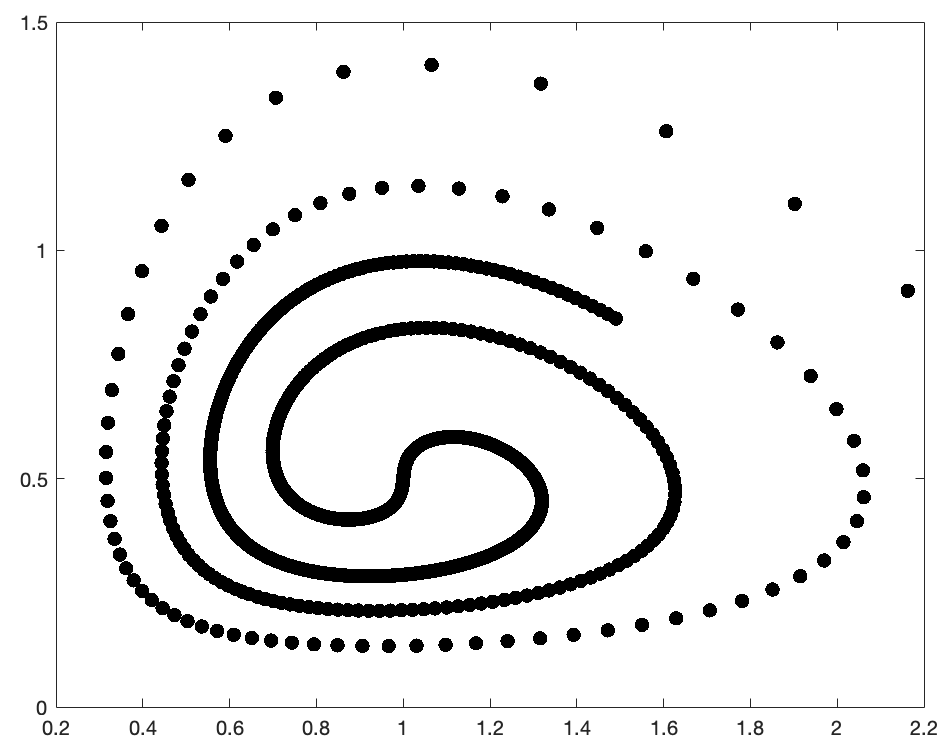} &
        \includegraphics[width=0.47\linewidth]{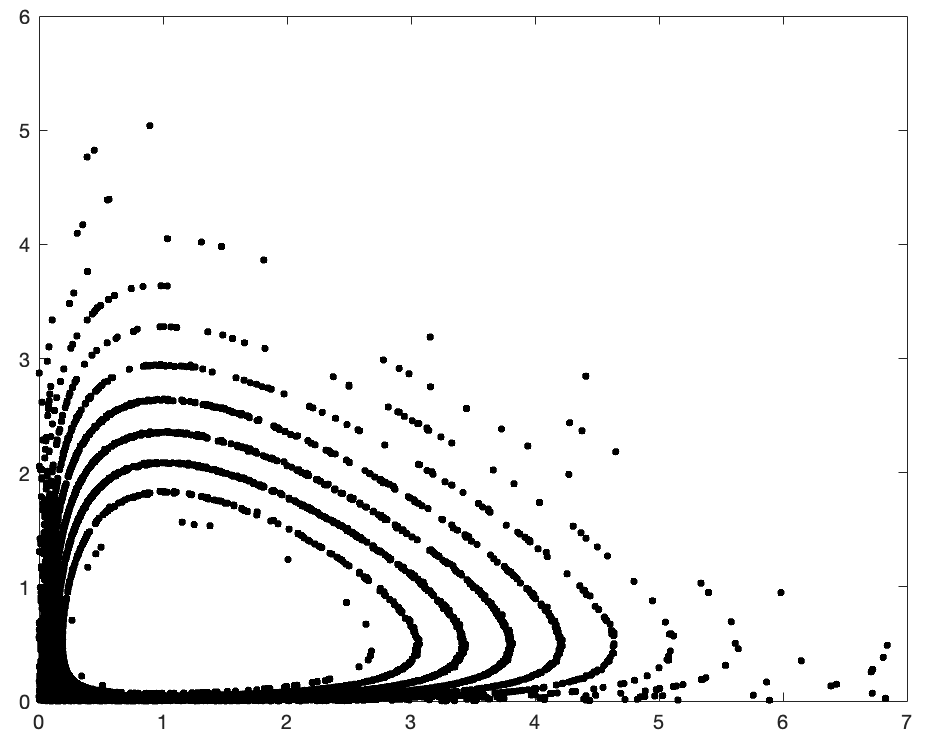}
        \end{tabular}
        \caption{Left: The solution for $\bfy_1(0)=1$ and $\bfy_2(0) \sim U(0,1)$ at $t=200$. Right:The solution for $\bfy(0)=[0.1, 0.3]^{\top} + \epsilon$ where $\epsilon \sim N(0, 0.05 \bfI)$ at $t=200$}
        \label{fig:ts}
        % \vspace*{-0.5cm}
\end{figure}

The second case is when the data is noisy. Consider the case that $\bfy_0^{\rm obs} = \bfy(0) + \epsilon$. In this case, we consider $\bfy_0 = [0.1,0.3]^{\top}$ and we contaminate it with Gaussian noise with $0$ mean and $0.05$ standard deviation. Again, we attempt to predict the data at $T=200$. We plot the results from 1000 simulations in \Cref{fig:ts} (right). Again, we see that the incomplete information transforms into a density estimation problem.
\end{example} 

The predator-prey model illustrates a fundamentally different scenario that is much more common in practice. \Cref{ex2} represents a much more realistic scenario, since in reality data is almost always sampled only partially or at low resolution and includes noise. For example, in weather prediction, where the primitive equations are integrated to approximate global atmospheric flow, it is common that we do not observe important variables such as pressure, temperature, or wind speed in sufficient resolution. In such settings, past observations are generally inadequate for accurate future prediction. While we could use classical ML techniques to solve closed systems, applying such techniques to open systems is often infeasible. This is because the system does not have a unique solution, given partial or noisy data. Motivated by the two examples above, we now form the learning problem of probabilistic forecasting (see, e.g. \cite{reich2015probabilistic}). 

\begin{definition}
    \label{def:stoch}{\bf Probabilistic Forecasting.}
    Probabilistic forecasting predicts future outcomes by providing a range of possible scenarios along with their associated probabilities, rather than a single point estimate. 
    
    Let the initial vector $\bfy(0) \sim \pi_0(\bfy)$ and assume that $\bfy(T)$ is obtained from integrating the dynamical system ${\cal D}$. Probabilistic forecasting refers to estimating and sampling from the distribution $\pi_T(\bfy)$.  
\end{definition}

This approach acknowledges that unique predictions may be unattainable and instead generates samples of future outcomes. By contrast, deterministic prediction relies on a closed system and highly accurate data, assumptions that often do not hold in practice. 
%In the next section, we discuss how to achieve probabilistic forecasts using stochastic interpolation.

\section{Probabilistic Forecasting and Flow Matching}
\label{sec:probForc}

We use a Flow Matching (FM) variant to propagate uncertainty and generate the initial distribution. Since complete state observation is often impossible in these systems, deterministic predictions are infeasible. Consequently, stochastic prediction becomes the natural choice. Consider a system where observations are noisy or partial. In particular, let $\bfy(t)$ be a vector that is obtained from an unknown dynamical system ${\cal D}$ and let
\begin{eqnarray}
    \bfq(t) = \bfS \bfy(t) + \bfepsilon,
\end{eqnarray}
where $\bfS$ is a sampling operator that typically reduces the dimension of $\bfy$ and $\bfepsilon \sim N(0, \sigma^2\bfI)$.
The distinction between $\bfy$ and $\bfq$ is critical: $\bfy$ represents the full state, while $\bfq$ represents only partial knowledge.

Our goal is to predict samples from the distribution of observed partial states $\bfq_T = \bfq(t+T) \sim \pi_T(\bfq)$ given a sample from the distribution of the data at $\bfq_0 = \bfq(t) \sim \pi_0(\bfq)$.
Although explicit functional forms for $\pi_0$ and $\pi_T$ are unavailable, we can estimate their probability densities given sufficient samples from both distributions. This assumption is unrealistic without additional constraints, since we typically observe only a single time series.
We make the following standard assumptions: (i) The time-dependent process is autoregressive, allowing us to treat each time step as an independent sample $\bfq_0$ with corresponding future values $\bfq_T$. Although standard in the literature, this assumption may not hold in all practical applications. (ii) Periodicity and clustering. Many problems in Earth science exhibit natural periodicity, enabling data to be organized by annual cycles. 

Since we can obtain data samples from probability distributions at time $0$ and at time $T$, we can use the data to provide a stochastic prediction. Consequently, stochastic prediction reduces to a probability transformation problem rooted in mass transport \citep{BB2000}. While efficient solutions for low-dimensional problems have been addressed \citep{fisher1970statistical, villani2009optimal}, solutions for problems in high dimensions are only recently being developed. One recent and highly effective approach to solving such problems is stochastic interpolation (SI) \citep{albergo2022building}, which belongs to the broader family of FM methods. The basic idea of stochastic interpolation is to generate a simple interpolant for all possible points in $\bfq_0$ and $\bfq_T$. A simple interpolant of this kind is linear and reads
\begin{eqnarray}
    \bfq_t = t\bfq_T + (1-t)\bfq_0
\end{eqnarray}
where $\bfq_T \sim \pi_T$, $\bfq_0 \sim \pi_0$, and $t\in[0,1]$ is a parameter. The points $\bfq_t$ are associated from a distribution $\pi_t(\bfq)$ that converges to $\pi_T(\bfq)$ at $t=1$ and to $\pi_0(\bfq)$ at $t=0$.
In FM, one learns (i.e., estimates) the velocity 
\begin{eqnarray}
   \bfv_t(\bfq_t) = \dot\bfq_t =\bfq_T -\bfq_0 ~,
\end{eqnarray}
by solving the stochastic optimisation problem
\begin{eqnarray}
    \label{eq:vel_learn}
    \min_{\bftheta} {\frac 12}{\mathbb E}_{\bfq_0, \bfq_1}\int_0^1 \|\bfv_{\bftheta}(\bfq_t, t) - (\bfq_T -\bfq_0) \|^2 \, dt ~.
\end{eqnarray}
Here $\bfv_{\bftheta}(\bfq_t, t)$ is a function approximator for the velocity $\bfv_t$ that is given at points $\bfq_t$, and $\bftheta$ are the trainable parameters. A common function approximator used for $\bfv_{\bftheta}(\bfq_t, t)$ is a deep neural network.

During inference, we assume that the velocity model $\bfv_{\bftheta}(\bfq_t, t)$ has been trained and use it to evolve $\bfq$ from time $0$ to $T$. Specifically, we solve for $\bfq_T = \bfq(T)$ by
\begin{eqnarray}
    \label{eq:ode}
    {\frac {d\bfq}{dt}} = \bfv_{\bftheta}(\bfq, t) \quad \text{and}, \quad \bfq(0) = \bfq_0~.
\end{eqnarray}
We employ a deterministic rather than stochastic framework. This allows the incorporation of high-accuracy numerical integrators that can take larger step sizes. However, this implies that solving the ODE with a single initial condition $\bfq_0$ yields a single prediction. In order to sample from the target distribution $\pi_T$, we sample $\bfq_0$ from $\pi_0$ and then use the ODE (\Eqref{eq:ode}) to push many samples forward. We thus obtain many samples for $\bfq_0$ from $\pi_0$ (see next section) and use them to sample from $\pi_T$.
\begin{comment}
    Note that the ODE obtained for $\bfq$ is not physical. It is merely used to interpolate the density from time $0$ to $T$. To demonstrate, we continue with the predator-prey model.
\end{comment}

\section{Sampling the Perturbed States} \label{sec:GeneratePerturbation}

Sampling from the initial distribution $\pi_0$ is a robust strategy for ensuring physical consistency. Several approaches can achieve this goal. A brute-force approach searches the dataset for similar states \cite{delle2013probabilistic}. For example, given a reference state $\bfq_0$, we retrieve all states $\|\bfq_0 - \bfq_i\|^2 \le \epsilon$. This approach is feasible in low dimensions but intractable in high dimensions. To address this, we employ a generative ML approach to produce realistic perturbations for a single input state.

To address this limitation, we employ a flow-based Variational Autoencoder (VAE) \citep{dai2019diagnosing, vahdat2021score}. In the encoding stage, the VAE maps data $\bfq_0$ from the original distribution $\pi_0(\bfq)$ to a latent vector $\bfz$ drawn from a standard Gaussian $N(0,\bfI)$. In the decoding stage, the model maps the latent vector $\bfz$ back to $\bfq_0$. Since the Gaussian space is convex, small perturbations to the latent vector $\bfz$ generate samples from $\pi_0(\bfq)$ that are perturbed around the original $\bfq_0$.

The difference between standard VAEs and flow-based VAEs is that, given $\bfq_0 \sim \pi_0$, standard VAEs attempt to learn a transformation directly to a Gaussian space $N(0,\bfI)$. However, as demonstrated in \cite{ruthotto2021introduction}, traditional VAEs produce latent representations that are only approximately Gaussian. This deviation from a true Gaussian distribution makes perturbation and sampling more difficult.
Flow-based VAEs learn Gaussian mappings with significantly higher accuracy. Furthermore, such flows can learn the decoding from a Gaussian distribution back to the original distribution, as the learned transformation is invertible (see Appendix \ref{app:proof} for proof) and continuous time normalizing flow \cite{chen2018neural, grathwohl2018ffjord, albergo2022building}.  

\begin{figure}[t]
    \centering
\includegraphics[width=1.0\linewidth]{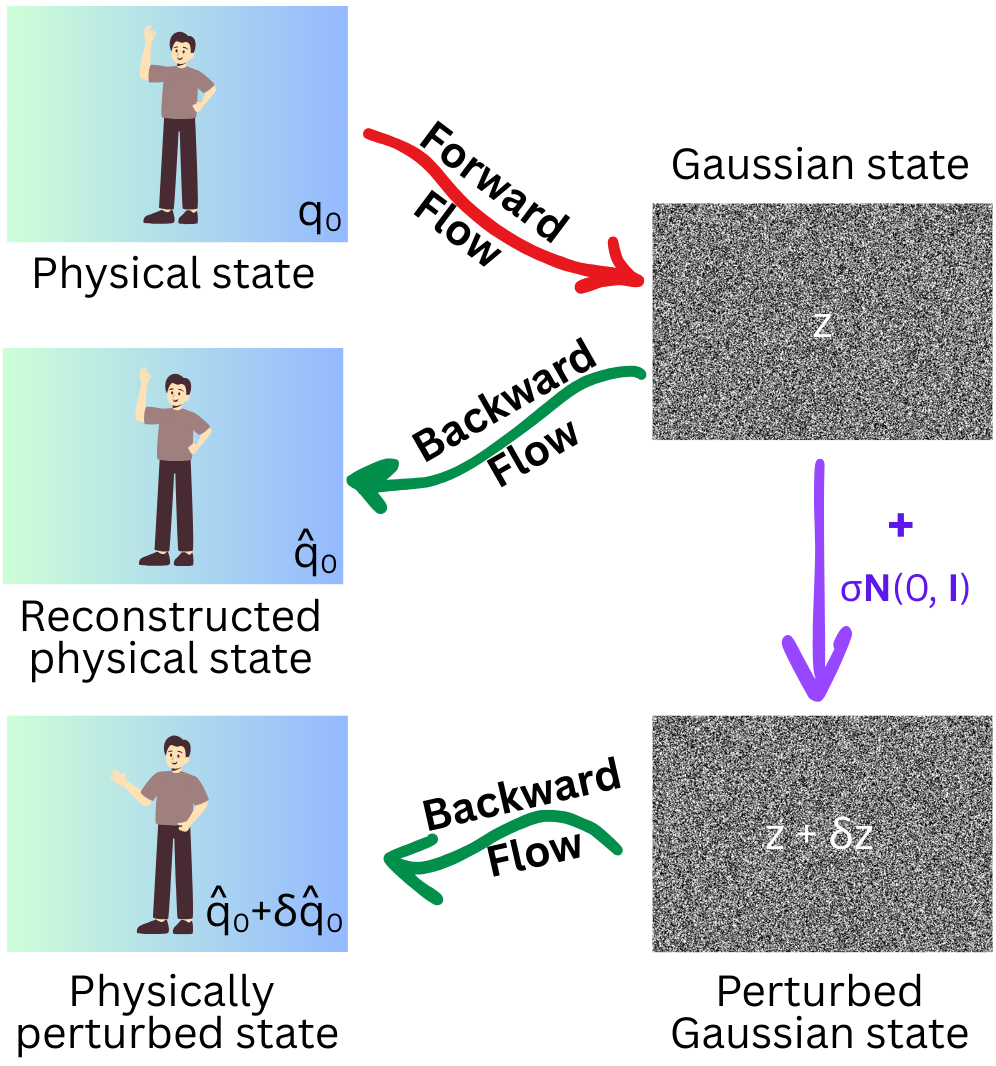}
    \caption{Diagram to show the process for obtaining a physical perturbed state.}
    \label{fig:physical_perturb}
    % \vspace{-0.5cm}
\end{figure}

Such flows constitute a special case of FM, a deterministic scheme for learning a transformation from a physical state $\bfq_0$ to a latent Gaussian state $\bfz$. Since the transformation is invertible, we can easily reverse-integrate from the latent space back to the physical space.

We define the interpolant  as
\begin{equation}
    \bfq_t = (1 - t)\bfq_0 + t\bfz~, \text{ if } t \in [0, 1].
\end{equation}

The velocity field associated with the above interpolant is
\begin{equation}
\bfu_t = {\frac {d\bfq_t}{dt}} = \bfz - \bfq_0 ~, \text{ if } t \in [0, 1].
\end{equation}

Note that the flow starts at $\bfq_0 \sim \pi_0$ towards $\bfz \sim N(0,\bfI)$ at $t=0$ and arrives to $\bfz$ at $t=1$. This is the encoding state. Training these models is straightforward and follows a similar procedure to training our flow matching model, namely, we solve a stochastic optimisation problem of the form
\begin{eqnarray}
    \label{eq:vel_learnz}
    \min_{\bftheta}{\mathbb E}_{\bfq_0, \bfz}\int_0^{1} \|\bfu_{\bftheta}(\bfq_t, t) - (\bfz - \bfq_0) \|^2 dt.
\end{eqnarray}

Given the velocity $\bfu_{\bftheta}$, we now generate perturbed samples in the following way. First, we integrate the ODE from $t=0$ to $t=1$, that is, we solve the ODE
\begin{eqnarray}
    \label{eq:ode1}
    {\frac {d\bfq}{dt}} = \bfu_{\bftheta}(\bfq, t), \quad \text{given } \quad \bfq(0) = \bfq_0.
\end{eqnarray}
This yields the latent state $\bfq(1) = \bfz \sim N(0, \bfI)$.
We then perturb the encoded latent state according to
\begin{eqnarray}
    \label{eq:pert}
    \widehat \bfq(1) = \bfq(1) + \sigma \boldsymbol{\omega},
\end{eqnarray}
where $\boldsymbol{\omega} \sim N(0,\bfI)$ and $\sigma$ is a hyper-parameter. 

Note that we can use the same velocity field for the reverse decoding process. The decoding map pushes the points $\widehat \bfq(1)$ back to $\widehat \bfq_0$ by solving for $\widehat \bfq_{0} = \bfq(t=0)$ such that 
\begin{eqnarray}
    \label{eq:ode2}
    {\frac {d\bfq}{dt}} = -\bfu_{\bftheta}(\bfq, t) \quad \text{and}, \quad \bfq(1) = \widehat \bfq(1)~.
\end{eqnarray} 

We then integrate the ODE \Eqref{eq:ode2} from $1$ to $0$ starting from $\widehat \bfq(1)$, obtaining a perturbed state $\widehat \bfq_0$ for an initial state $\bfq_0$. The magnitude of the perturbation is controlled by the hyperparameter $\sigma$. We decode $M$ perturbed latent states to obtain $M$ perturbed samples, which define the input distribution $\pi_0$ for ensemble prediction as described in \Cref{sec:probForc}.\\

\textbf{\emph{Theorems, proofs, and pseudocode are provided in the Appendix \ref{app:proof}.}}

\section{Experiments}\label{sec:expt}
We apply the proposed framework to diverse challenging temporal datasets to demonstrate its effectiveness for probabilistic forecasting and uncertainty quantification. We conducted experiments on two synthetic datasets that allow for full analytical evaluation, as well as on a real-world weather dataset. 

\subsection{Datasets}\label{subsec:datasets}
The datasets and experimental setup are described below.

\textbf{Lotka–Volterra Predator–Prey model:} A nonlinear dynamical system governed by \Eqref{eq:pp}, with parameters $\bfp_1$ = 2/3, $\bfp_2$ = 4/3, $\bfp_3$ = 1 and $\bfp_4$ = 1. The time series model takes the current state as input to predict the state after a time step of $t = 200$. This long integration time produces a widely distributed and complex output probability distribution.
\textbf{Architecture:} multilayer perceptron.
\textbf{Test case:} Standard test set for quality of forecasting. Uncertainty is introduced by adding Gaussian noise with mean $\bfy(0) = [0.1, 0.3]^T$ and standard deviation $\sigma = 0.05$. The distribution of final states is obtained by numerically integrating over $t \in [0, 200]$ starting from noisy initial states. %Quality of forecasting is measured over the whole dataset by comparing our real target for prediction with our ensemble of predictions using CRPS. 

\textbf{MovingMNIST:} A variant of the MNIST dataset \citep{MovingMNIST}. This is a synthetic video dataset designed to benchmark video prediction models. Each sequence features 20 frames of two MNIST digits moving along randomly chosen trajectories with varying speeds and directions. Following the setup described in \cite{MovingMNIST}, a predictive model takes the first 10 frames as input to predict the subsequent 10 frames.
\textbf{Learning functional:} U-Net with attention \cite{dhariwal2021diffusion}.
\textbf{Test case:} Standard test set for quality of forecasting. Uncertainty is introduced by adding Gaussian noise to the initial velocities and direction angles during video sequence generation. %Quality of forecasting is measured over the whole dataset by comparing our real target for prediction with our ensemble of predictions using CRPS.   

\textbf{WeatherBench:} We use a subset of the WeatherBench dataset \citep{weatherBench}, which is derived from ERA5 reanalysis data \citep{era5hersbach} and includes three downsampled variants. Specifically, we use the version downsampled to a 5.625$^{\circ}$ resolution on a latitude–longitude grid. Details of variables and model configurations are provided in Appendix \ref{app:WBvars}. We follow the experimental setup described by \citet{nguyen2023climax}. We use a 6-hour reanalysis window as the model time step.
\textbf{Learning functional:} U-Net with attention \cite{dhariwal2021diffusion}.
\textbf{Test case:} Standard test set for quality of forecasting. To evaluate uncertainty quantification, we select a test state and generate the initial distribution by identifying similar states in the full dataset. % Quality of forecasting is measured over the whole dataset by comparing our real target for prediction with our ensemble of predictions using CRPS.

The statistics and configurations for each dataset used in the experiments are provided in Appendix~\ref{app:datasetstats}. Additional experiments on Vancouver93 (a 93-channelled timeseries of temperature) and CloudCast (a satellite spatiotemporal dataset) are provided in the Appendix. In the next section, we provide details of the experiments and results for the predator-prey model, MovingMNIST, and WeatherBench datasets.

\textbf{\emph{See Appendix for more experiments on additional datasets.}}

\subsection{Experiment and Results}

We evaluate the framework using test cases with uncertainty introduced via classical perturbation methods. For each dataset, we thus have access to ground-truth distributions for both initial and final states, enabling statistical evaluation of the predicted ensembles. We compare against two state-of-the-art methods: Denoising Diffusion Probabilistic Models (DDPM) \cite{ho2020denoising} and Probabilistic Flow Matching with Föllmer's process (PFI) \cite{chen2024probabilistic}. Both are SDE-based conditional diffusion models that underpin most popular probabilistic forecasting approaches (see \citet{price2025probabilistic, li2024generative, bonev2025fourcastnet}).  

To model initial state uncertainty, we apply the generative flow matching method from \Cref{sec:GeneratePerturbation} to create an ensemble of initial conditions. We then propagate these forward using deterministic flow matching to generate an ensemble of final states. We compare the predicted distributions against ground-truth values and standard ensemble forecasting baselines using the metrics detailed below and in the Appendix.

\paragraph{Metrics:} \label{Result:genMetrics}
We evaluate the performance of our method using a set of statistical and image-based metrics. For each predicted ensemble, a corresponding ground-truth ensemble is available. The objective is to ensure that the statistical properties of the predicted ensemble closely match those of the target distribution.

Following recent work on probabilistic forecasting \cite{price2025probabilistic, oskarsson2024probabilistic}, we employ the CRPS between real target and our ensemble of predictions. We also used additional metrics for closer look at the ensemble. We also use mean and standard deviation scores, which measure the average and variability of pixel values across the state space. Ideally, these scores should match closely between the predicted and target ensembles, indicating similar distributions. Additionally, drawing from techniques in turbulent flow analysis \cite{pope2001turbulent}, we compare ensemble mean and standard deviation fields. For similarity assessment, we employ standard image metrics: Mean Squared Error (MSE), Mean Absolute Error (MAE), and Structural Similarity Index (SSIM). Lower MSE and MAE values indicate better alignment with the ground truth, while higher SSIM reflects better structural similarity and better alignment with the ground truth. Full metric definitions are provided in Appendix \ref{app:metrics}.

\begin{table*}[!ht]
\centering
\begin{tabular}{c|l|c|c|c|ccc|ccc}
\hline
& & & &  & & Mean & & & SD \\
& & & {Mean}& {Std Dev} & & State & & & State \\
Data & {Methods}      & CRPS ($\downarrow$) & {Score}     & {Score} & {MSE($\downarrow$)}& {MAE($\downarrow$)} & {SSIM($\uparrow$)} & {MSE($\downarrow$)}& {MAE($\downarrow$)} & {SSIM($\uparrow$)} \\\hline
 & VAR             & 5.56 &  2.1    &   3.7  &  1.1    &  8.1  &      &    3.1  &  5.7    &      \\
& DDPM(LSTM)     & 1.42e-1 & 7.54e-1     &  1.21    &  2.3e-2    &    1.9e-1   &      &    4.2e-2  &  2.9e-1    &      \\
Predator & \textbf{FM (Our)}          & \third{7.48e-2} & 7.43e-1     &  1.07    &  \third{8.0e-3}    &    \third{8.8e-2}   &      &    \third{2.7e-2}  &  \third{1.6e-1}    &      \\
Prey & PFI          & \second{6.88e-2} & 7.46e-1   &   1.06   &   \second{7.6e-3}   &    \second{7.3e-2}   &      &  \second{2.4e-2}    &  \second{1.1e-1}   &      \\
& \textbf{FMwS (Our)}   & \first{3.85e-2}  & 7.49e-1     &  1.06    &  \first{6.7e-3}    &    \first{5.2e-2}   &      &    \first{8.7e-3}  &  \first{8.1e-2}    &      \\
& Ground truth   & 0 & 7.55e-1     &  1.04    &  0    &    0  &      &    0  &  0    &          \\
\hline
 & DDPM(U-Net)    & 5.39e-2 &  5.42e-2     &  2.91e-1    &  8.9e-3    &    7.1e-2   &  0.723    &    9.2e-3  &  7.6e-2    &    0.622  \\
& DDPM(adrnet)     & 3.65e-2  &  5.37e-2     &  3.02e-1    &  6.4e-3    &    4.9e-2   &  0.791    &    7.4e-3  &  6.2e-2    &    0.671  \\
Moving & \textbf{FM (Our)   }    & \third{1.46e-2} & 5.58e-2     &  1.87e-1    &  \third{2.5e-3}    &    \third{1.8e-2}   &  \third{0.843}    &    \third{3.6e-3}  &  \third{2.5e-2}    &    \third{0.744}  \\
MNIST & PFI          & \second{1.03e-2} &   5.71e-2   &   2.93e-1   &   \second{1.9e-3}   &   \second{1.6e-2}    &  \second{0.879}    &   \second{2.7e-3}   &  \second{1.4e-2}   &   \second{0.803}   \\
 & \textbf{FMwS (Our)         }&  \first{5.67e-3} &  5.98e-2    &  2.34-1    &  \first{8.6e-4}    &    \first{8.6e-3}   &  \first{0.917}    &    \first{1.2e-3}  &  \first{9.1e-3}    &    \first{0.894}  \\
& Ground truth   & 0 & 6.01e-2     &  2.21e-1    &  0    &    0  &  1    &    0  &  0    &    1      \\
\hline
 & DDPM(U-Net)            & 3.12e2 &  1.31e4    &    9.87e1  &  6.1e4    &    4.7e2 &  0.585    &    4.3e4  &  1.1e2    &    0.487   \\
& DDPM(adrnet)         & 1.47e2 &  1.33e4    &  8.22e1    &  5.7e4    &   2.1e2   &  0.616    &    4.5e4  &  9.2e1    &    0.531  \\
Weather & \textbf{FM (Our)}       & \third{3.53e1} & 1.36e4     &  9.66e1    &  \third{1.9e4}    &    \third{5.2e1}   &  \third{0.868}    &    \third{1.1e4}  &  \third{3.6e1}    &    \third{0.608}  \\
Bench & PFI          & \second{2.40e1} &   1.36e4   &   9.45e1   &   \second{9.2e4}   &    \second{3.7e1}   &    \second{0.892}  &   \second{8.3e3}   &   \second{2.4e1}  &    \second{0.672}  \\
 & \textbf{FMwS (our)}         & \first{1.34e1} & 1.36e4     &  9.32e1    &  \first{6.3e3}    &    \first{2.2e1}   &  \first{0.911}    &    \first{1.6e3}  &  \first{1.0e1}    &    \first{0.805}  \\
& Ground truth   & 0 & 1.36e4     &  9.15e1    &  0    &    0  &  1    &    0  &  0    &    1      \\
\hline
\end{tabular}
\caption{Comparison of metrics for ensemble predictions. Best results are highlighted in \first{red}; second-best are in \second{violet}; third-best are in \third{black}.}
\label{tab:TotalstatsMetrics}
\end{table*}

\begin{figure}[!t]
    \centering
\includegraphics[width=1.0\linewidth]{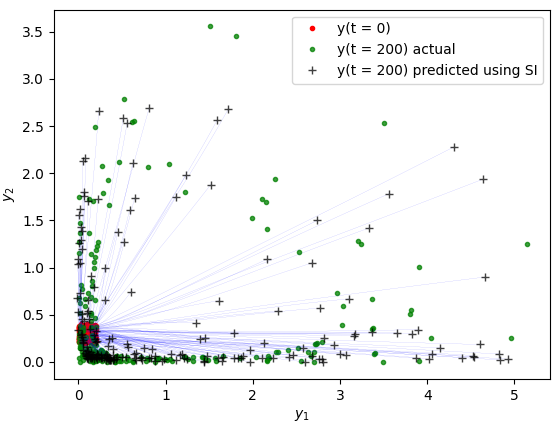}
    \caption{Comparison of the actual final distribution and that obtained using FM on the predator-prey model. Trajectories for transport learned by FM are in blue. Note that the trajectories are not physical.}
    \label{fig:pred-prey200}
\end{figure} 

\begin{figure*}[!t]
    \centering
    \includegraphics[width=1.0\linewidth]{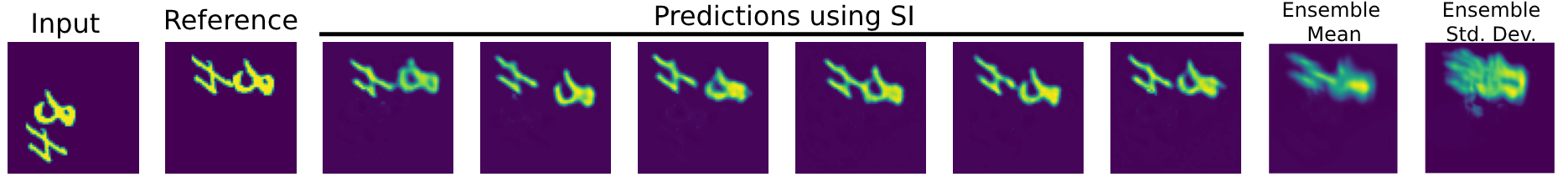}
    \caption{Six of 50 Moving MNIST trajectory predictions obtained using FM and their ensemble mean and standard deviation.}
    \label{fig:MMMultiPred}
\end{figure*}

\begin{figure*}[!t]
    \centering
    \includegraphics[width=1.0\linewidth]{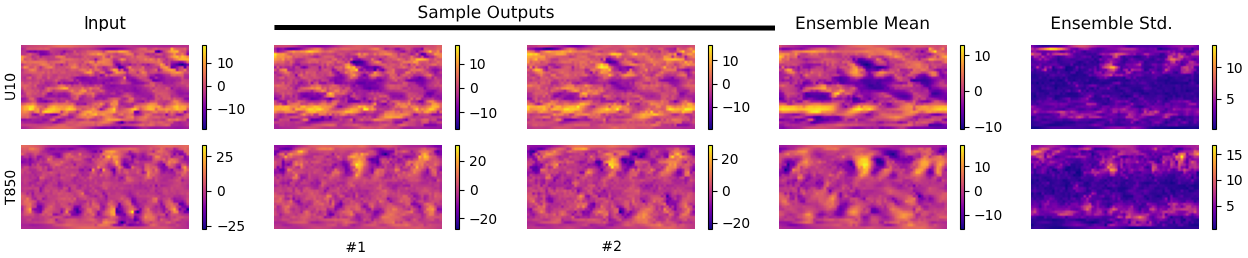}
    \caption{Sample stochastic forecasts and the ensemble statistical states of U10 and T850 for 100 forecasts for a 2-day lead time stochastic prediction using FM.}
    \label{fig:wBsampleforecasts}
\end{figure*}

{\bf Predator-Prey Model:}
\Cref{fig:pred-prey200} presents the initial and final ground-truth distributions alongside FM predictions, highlighting key findings. Our method closely matches the final distribution and accurately captures high-probability regions. The widely used Vector Autoregression (VAR) method \cite{stock2001vector}, commonly applied in high-dimensional probabilistic forecasting, fails to provide satisfactory predictions \cref{tab:TotalstatsMetrics}. This failure is due to the strong nonlinearity inherent in this system. It can be seen that DDPM and PFI using LSTM \cite{hochreiter1997long}, and our FM approach, are effectively various conditional generative models. However, our FMwS approach, which combines generative sampling of physically perturbed initial states with flow matching, achieves the best performance. Additional statistical comparisons, including probability estimates for state variables, are provided in Appendix~\ref{app:predpreystats}.\\

{\bf Moving MNIST and WeatherBench:}
As shown in Table \Cref{tab:TotalstatsMetrics}, DDPM with a U-Net \cite{dhariwal2021diffusion}, AdrNet \cite{zakariaei2024advection}, PFI with a U-Net, and our FM approach all effectively model uncertainty via perturbation-based sampling. However, our FMwS variant outperforms these baselines. 

The predictions for MovingMNIST are shown in \Cref{fig:MMMultiPred}. The predictions exhibit variability while remaining consistent with the ground-truth distribution. However, the ensemble mean and standard deviation closely match the deterministic unperturbed reference state. Additional statistical comparisons of the ensemble predictions using FM with the ensemble of actual targets are provided in Appendix~\ref{app:MMstats}. We similarly apply our approach to the WeatherBench dataset to develop a probabilistic forecasting model. As shown in \Cref{fig:wBsampleforecasts}, the model produces diverse stochastic outputs with state-of-the-art CRPS scores. The ensemble mean and standard deviation closely match the reference values. We present the standard variables U10 and T850 here for clarity. Additional statistical comparisons for Z500, T2m, U10, and T850 are provided in Appendix~\ref{app:wBstats}. 

\section{Discussions}\label{sec:discussions}
The experimental results show that our deterministic approach is competitive with state-of-the-art conditional diffusion models while enabling uncertainty quantification. However, our approach is simpler, more efficient, and maintains physical interpretability. See Appendix \ref{app:ODEvsSDE} for a detailed computational comparison with typical diffusion models. Theoretically, our method requires two neural network evaluations per ensemble member, compared to more than 50 evaluations for the diffusion-based methods. Moreover, Appendix~\ref{app:MM} presents ablation studies examining the sensitivity to perturbation types and noise levels. These studies validate our proposed approach. We now discuss several limitations and avenues for future work. Our method is best suited for problems with small time steps and low intrinsic stochasticity in the source time series. However, our approach can quantify uncertainty in deterministic systems, which is non-trivial for purely stochastic models. This occurs because stochastic models applied to deterministic time series cannot effectively capture the true uncertainty structure, learning instead to ignore stochasticity. However, our method may struggle with inherently stochastic systems (e.g., financial time series) or long-range prediction problems. This suggests hybrid approaches combining our method with diffusion-based techniques for improved performance.      

\section{Conclusion}\label{sec:conclusion}

Probabilistic forecasting is a fundamental research area across numerous scientific disciplines. While diffusion-based generative models have improved uncertainty modeling, they require significant computational resources. In this work, we combine flow matching with normalizing flows to generate physically meaningful perturbations, enabling explicit uncertainty quantification in dynamical systems. To our knowledge, this application of generative models for perturbation generation has not been previously explored.

We then employ a second flow matching model, a deterministic distribution-to-distribution mapping, to propagate uncertainty over extended time horizons. Our approach is based on ordinary differential equations (ODEs) rather than stochastic differential equations (SDEs), enabling faster training and inference. Furthermore, we decouple stochasticity, introduced at inference time, from the deterministic training process. This design eliminates common constraints while enabling greater flexibility in application.

Our experiments across diverse benchmarks validate the effectiveness of this approach. We achieve state-of-the-art probabilistic scoring while maintaining computational efficiency and physical interpretability. The framework's modularity is a key advantage: both components can be used independently or combined with other machine learning approaches. This flexibility provides practitioners with powerful tools for uncertainty quantification in complex dynamical systems.

% \begin{contributions} % will be removed in pdf for initial submission 
% 					  % (without ‘accepted’ option in \documentclass)
%                       % so you can already fill it to test with the
%                       % ‘accepted’ class option
%     Briefly list author contributions. 
%     This is a nice way of making clear who did what and to give proper credit.
%     This section is optional.

%     H.~Q.~Bovik conceived the idea and wrote the paper.
%     Coauthor One created the code.
%     Coauthor Two created the figures.
% \end{contributions}

% \begin{acknowledgements} % will be removed in pdf for initial submission,
% 						 % (without ‘accepted’ option in \documentclass)
%                          % so you can already fill it to test with the
%                          % ‘accepted’ class option
%     Briefly acknowledge people and organizations here.

%     \emph{All} acknowledgements go in this section.
% \end{acknowledgements}

\newpage

% References
\bibliography{uai2026-template}

\newpage

\onecolumn

\title{Fast and Flexible Probabilistic Forecasting of Dynamical Systems\\ using Flow Matching and  Physical Perturbation (Supplementary Material)}
\maketitle

\appendix

\section{Definitions, Theorems and Proofs} \label{app:proof}

Given that there is a dynamical system ${\cal D}$ of the form
\begin{eqnarray}
    \dot \bfy = f(\bfy, t, \bfp),
\end{eqnarray}
where, $\bfy \in \mathbb{R}^n$ is a n-dimensional state vector and $f$ is a function that depends on the state $\bfy$, the time $t$ and a k-dimensional parameter $\bfp \in \mathbb{R}^k$.

\begin{definition}\textbf{Physical Space} \label{def:PhysicalSpace}
The physical space ${\cal Y}$ is defined as the vector space of observed data $\bfy \in \mathbb{R}^n$ from an n-dimensional dynamical system ${\cal D}$.
\end{definition}

\begin{definition}\textbf{Gaussian Space} \label{def:GaussSpace}
An n-dimensional Gaussian space ${\cal G}$ can be defined as the vector space of $\bfz \in \mathbb{R}^n$ such that $\bfz \sim \mathcal{N}(0, I)$ and $I$ is the n-dimensional identity matrix.
\end{definition}

\begin{definition}\textbf{Physical Perturbation} \label{def:PhysicalPerturbation}
Let $\bfy$ be an n-dimensional vector in physical space $\cal Y$ for the dynamical system $\cal D$. Then, physical perturbation is a function $ \cal P_D : \cal Y \rightarrow \cal Y$ defined as 
\begin{eqnarray}
    \cal P_D(\bfy) = \bfy + \delta_{\cal D},
\end{eqnarray}
such that given a perturbation vector $\delta_{\cal D} \in \mathbb{R}^n$, and for a constant $\epsilon \in \mathbb{R}$,
\begin{eqnarray}
     \parallel \cal{P_D}(\bfy) - \bfy  \parallel^\text{2} ~\le ~\epsilon .
\end{eqnarray}

\end{definition}

\begin{definition}\textbf{Gaussian Perturbation} \label{def:GaussPerturbation}
A Gaussian perturbation is a function $ \cal P_G : \mathbb{R}^{\text{n}} \rightarrow \mathbb{R}^{\text{n}}$ over any n-dimensional data $\bfy \in \mathbb{R}^n$ defined as 
\begin{eqnarray}
    \cal P_G(\bfy) = \bfy + \delta_{\cal G},
\end{eqnarray}
such that given a perturbation vector $\delta_{\cal G} \sim \mathcal{N}(0, \sigma^2 I)$, where $\sigma$ is the standard deviation of the perturbation and \( I \) is the identity matrix, and for a constant $\epsilon \in \mathbb{R}$
\begin{eqnarray}
     \parallel \cal{P_G}(\bfy) - \bfy  \parallel^\text{2} ~\le ~\epsilon .
\end{eqnarray}
\end{definition}

\begin{assumption}
Let ${\cal Y}$ be the space of observed data $\bfy \in \mathbb{R}^n$ from a dynamical system ${\cal D}$ and Gaussifying flow matching function learns a linear interpolation between $\bfy$ and $\bfz \sim \mathcal{N}(0, I)$, where $I$ is the n-dimensional identity matrix. Then the interpolant $\bfy_t$ is defined as 
\begin{eqnarray}
     \bfy_t(\bfy, \bfz, t) = (1-t)\bfy + t\bfz, 
\end{eqnarray}
where $t \sim U(0,1)$.
\end{assumption}

\begin{definition}\textbf{Encoder} \label{def:Genc}
Let $\bfy$ be an n-dimensional vector in physical space $\cal Y$ for the dynamical system $\cal D$ and $\bfz$ be an n-dimensional vector in the latent Gaussian space $\cal{G}$. Encoder is a function $ \cal{E} : \cal{Y} \rightarrow \cal{G}$ defined as 
\begin{eqnarray}
    \cal{E}(\bfy) = \bfy + \int_{\text{0}}^{\text{1}}\text{v}(\bfq(\text{t}), \theta, \text{t) dt},
\end{eqnarray}
where velocity $\text{v}$ is a function of $\bfq$(t), a scalar t $\in \mathbb{R}$ and trained parameter $\theta$ such that 
\begin{eqnarray}
    \bfq(\text{t}) = \bfy + \int_{\text{0}}^{\text{t}}\text{v}(\bfq(\text{t}), \theta, \text{t) dt}.
\end{eqnarray}

If we assume that the flow matching velocity $\text{v}$ is learned accurately, then 
\begin{eqnarray}
  \text{v}(\bfq(\text{t}), \theta, \text{t}) = \bfz - \bfy,
\end{eqnarray}
and
\begin{eqnarray}
  \cal{E}(\bfy) = \bfz.
\end{eqnarray}
\end{definition}

\begin{definition}\textbf{Decoder} \label{def:Gdec}
Let $\bfy$ be an n-dimensional vector in physical space $\cal Y$ for the dynamical system $\cal D$ and $\bfz$ be an n-dimensional vector in the latent Gaussian space $\cal{G}$. Decoder is a function $ D : \cal{G} \rightarrow \cal{Y}$ defined as 
\begin{eqnarray}
    D(\bfz) = \bfz - \int_{\text{0}}^{\text{1}}\text{v}(\bfq(\text{t}), \theta, \text{t) dt},
\end{eqnarray}

where velocity $\text{v}$ is a function of $\bfq$(t), a scalar t $\in \mathbb{R}$ and trained parameter $\theta$ such that 
\begin{eqnarray}
    \bfq(\text{t}) = \bfy - \int_{\text{t}}^{\text{1}}\text{v}(\bfq(\text{t}), \theta, \text{t) dt}.
\end{eqnarray}

If we assume that the flow matching velocity $\text{v}$ is learned accurately, then 
\begin{eqnarray}
  \text{v}(\bfq(\text{t}), \theta, \text{t}) = \bfz - \bfy,
\end{eqnarray}
and
\begin{eqnarray}
  D(\bfz) = \bfy.
\end{eqnarray}
\end{definition}

\begin{lemma}[Inverse Property]\label{lemma1}
Let ${\cal Y}$ be the physical space for an n-dimensional dynamical system ${\cal D}$. Given an accurately trained Gaussifying flow matching function on the data in $\cal Y$, the encoder $\cal E$ and decoder $D$ are inverses of each other.
\end{lemma}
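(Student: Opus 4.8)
The plan is to recognise the encoder $\mathcal{E}$ and the decoder $D$ of Definitions~\ref{def:Genc} and \ref{def:Gdec} as the forward and backward transport maps of one and the same ordinary differential equation, and then to reduce the lemma to the standard fact that, for an ODE whose right-hand side is Lipschitz in the state, the time-reversed flow is the two-sided inverse of the flow. First I would unwind the two definitions. The integral expression for $\mathcal{E}(\bfy)$ is exactly the terminal value $\bfq(1)$ of the solution of the initial value problem $\dot\bfq(t)=v(\bfq(t),\theta,t)$, $\bfq(0)=\bfy$, on $t\in[0,1]$; the integral expression for $D(\bfz)$ is exactly the initial value $\bfq(0)$ of the solution of the \emph{same} equation with the terminal condition $\bfq(1)=\bfz$ (once the inner integrand in Definition~\ref{def:Gdec} is read with $\bfz$ in place of $\bfy$ and a single time-orientation convention is fixed, cf.\ \eqref{eq:ode2}). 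Writing $\Phi_{s\to t}$ for the map carrying the state at time $s$ to the state at time $t$ along trajectories of $\dot\bfq=v(\bfq,\theta,\cdot)$, this identifies $\mathcal{E}=\Phi_{0\to1}$ and $D=\Phi_{1\to0}$.

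Next I would invoke the Picard--Lindel\"of theorem to make $\Phi$ well defined and to obtain the cocycle identities I need. The velocity $v(\cdot,\theta,\cdot)$ is the output of a neural network with smooth activations, hence continuous in $t$ and locally Lipschitz in its state argument; together with the usual sublinear-growth (or a-priori boundedness) condition that keeps trajectories finite on the compact interval $[0,1]$, this gives a unique solution through every initial or terminal state. Consequently $\Phi_{s\to t}$ is a well-defined diffeomorphism for all $s,t\in[0,1]$ and satisfies $\Phi_{t\to u}\circ\Phi_{s\to t}=\Phi_{s\to u}$ with $\Phi_{s\to s}=\mathrm{id}$. Taking $(s,t,u)=(0,1,0)$ gives $D\circ\mathcal{E}=\Phi_{1\to0}\circ\Phi_{0\to1}=\mathrm{id}_{\mathcal Y}$, and $(s,t,u)=(1,0,1)$ gives $\mathcal{E}\circ D=\Phi_{0\to1}\circ\Phi_{1\to0}=\mathrm{id}_{\mathcal G}$, which is the claim. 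I would also remark that the invertibility is purely structural and does not use the ``accurately trained'' hypothesis; that hypothesis only pins down the maps themselves --- along a training pair the velocity is the constant $\bfz-\bfy$, so $\bfq(t)=(1-t)\bfy+t\bfz$ is a straight line, giving $\mathcal{E}(\bfy)=\bfq(1)=\bfz$ and, symmetrically, $D(\bfz)=\bfz-\int_0^1(\bfz-\bfy)\,dt=\bfy$, matching the last displays of Definitions~\ref{def:Genc}--\ref{def:Gdec}.

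The step I expect to be the real obstacle --- the one needing an honest hypothesis rather than a one-line appeal --- is precisely this regularity requirement, since the inverse property is logically equivalent to uniqueness of the flow, so the lemma is only as true as Lipschitzness-in-state of the trained field. This is delicate in two ways. First, ReLU-type networks are only piecewise linear, so the velocity is merely \emph{locally} Lipschitz off a null set; the argument can still be pushed through (uniqueness for a.e.\ initial condition, or well-posedness of the associated transport equation in the sense of DiPerna--Lions), but it is no longer Picard--Lindel\"of verbatim. Second, and more fundamentally, the idealised flow-matching target is the \emph{marginal} velocity $v^\star(\bfq,t)=\mathbb{E}[\bfz-\bfy\mid\bfq_t=\bfq]$, whose spatial Lipschitz constant can degenerate as $t\to1$ (the Gaussian endpoint), so a fully rigorous statement either restricts the horizon to $[0,1-\delta]$ or postulates outright that $v_\theta$ is Lipschitz in the state on $[0,1]$. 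I would therefore state the lemma with that postulate explicit and then run the cocycle argument above; what remains --- verifying that the two integral formulas reduce to the two boundary-value problems, and specialising the cocycle relation --- is routine.
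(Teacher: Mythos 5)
Your proof is correct but follows a genuinely different route from the paper's. The paper's own proof is a two-line composition that leans entirely on the ``accurately trained'' idealization baked into Definitions~\ref{def:Genc} and \ref{def:Gdec}: under perfect training the velocity along a pair is the constant $\bfz-\bfy$, so by definition $\mathcal{E}(\bfy)=\bfz$ and $D(\bfz)=\bfy$, and composing in both orders gives the left- and right-inverse properties immediately. You instead identify $\mathcal{E}$ and $D$ as the forward and time-reversed flow maps $\Phi_{0\to1}$ and $\Phi_{1\to0}$ of the same ODE and deduce $D\circ\mathcal{E}=\mathrm{id}$ and $\mathcal{E}\circ D=\mathrm{id}$ from Picard--Lindel\"of uniqueness and the cocycle identity. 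What your approach buys is substantial: it proves invertibility for \emph{any} Lipschitz-in-state trained velocity, not just the idealized one, which matters because (as you correctly point out) the actual flow-matching minimizer is the marginal velocity $\mathbb{E}[\bfz-\bfy\mid\bfq_t=\bfq]$ rather than the per-pair constant $\bfz-\bfy$ --- the paper's idealization quietly assumes a deterministic pairing between each $\bfy$ and ``its'' $\bfz$, which is exactly where its argument is weakest, and your observation that the Lipschitz constant can degenerate as $t\to1$ is a real caveat the paper does not address. What the paper's route buys is brevity and alignment with its stated hypotheses; under its own definitions the lemma is essentially tautological. The only caution on your side is that you are proving a slightly stronger statement than the one the paper poses (invertibility of the learned flow rather than of the idealized encoder/decoder), so you should make explicit, as you do, that the Lipschitz postulate replaces rather than follows from the ``accurately trained'' hypothesis.
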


\begin{proof}
    Assume the Gaussifying flow matching function is trained to convergence. Then for data $\bfy \in \cal Y$
    \begin{eqnarray}
      \cal{E}(\bfy) = \bfz,
    \end{eqnarray}
    where $\bfz \sim \mathcal{N}(0, I)$ and $I$ is the n-dimensional identity matrix, see \Cref{def:Genc}.\\

    If we decode $\cal{E}(\bfy)$ (see \Cref{def:Gdec}), we get 
    \begin{eqnarray}
      \text{D}(\cal{E} (\bfy)) = \text{D}(\bfz).\\
      \implies \text{D}(\cal{E}(\bfy)) = \bfy. 
    \end{eqnarray}
    Therefore, $D$ is the left inverse of $\cal E$.\\
    
    Also similarly,
    \begin{eqnarray}
      \text{D}(\bfz) = \bfy.
    \end{eqnarray}
    By operating with the encoder, we get
    \begin{eqnarray}
      \cal{E} (\text{D}(\bfz)) = \cal{E}(\bfy).\\
      \implies \cal{E} (\text{D}(\bfz)) = \bfz.
    \end{eqnarray}
    Consequently, D is the right inverse of $\cal E$.

    Since D is both left inverse and right inverse of $\cal E$,
    \begin{eqnarray}
      \text{D} = \cal E^{-\text{1}}.
    \end{eqnarray}
    
\end{proof}

\begin{theorem}[Physical Perturbation]\label{theorem1}
Let ${\cal Y}$ be the space of observed data $\bfy \in \mathbb{R}^n$ from an n-dimensional dynamical system ${\cal D}$. Given an accurately trained Gaussifying flow matching function on the data in $\cal Y$, a Gaussian perturbation $\cal P_G$ of the encoded latent vector in an encoder-decoder setup results in a physical perturbation $\cal P_D$ of data $\bfy \in \cal Y$.

Mathematically,
\begin{eqnarray}
    D(\cal{P_G}(\cal{E}(.))) \subseteq \cal{P_D}(.)
\end{eqnarray}
\end{theorem}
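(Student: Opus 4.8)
The plan is to combine Lemma~\ref{lemma1} --- that $D$ and $\mathcal{E}$ are mutual inverses --- with a Lipschitz estimate on the decoder, so that a controlled Gaussian perturbation in latent space is transported to a controlled perturbation in physical space that still lands in $\mathcal{Y}$. Fix $\bfy \in \mathcal{Y}$ and put $\bfz = \mathcal{E}(\bfy) \in \mathcal{G}$. Applying the Gaussian perturbation of Definition~\ref{def:GaussPerturbation} gives $\widehat\bfz = \bfz + \delta_{\mathcal{G}}$ with $\delta_{\mathcal{G}} \sim \mathcal{N}(0,\sigma^2 I)$, and by that definition $\|\delta_{\mathcal{G}}\|^2 \le \epsilon_{\mathcal{G}}$ for a constant $\epsilon_{\mathcal{G}}$. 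Decoding then yields $D(\widehat\bfz)$, which I would compare against $D(\bfz) = D(\mathcal{E}(\bfy)) = \bfy$, the last equality being exactly Lemma~\ref{lemma1}.

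The crucial ingredient is the (Lipschitz) continuity of $D$. Since $D$ is the time-$1$ map of the reverse-time flow generated by the learned velocity $-\bfu_{\bftheta}(\cdot,t)$ (Definition~\ref{def:Gdec}), and $\bfu_{\bftheta}$ is smooth --- hence Lipschitz in the state variable with some constant $\ell$ uniformly over $t\in[0,1]$ and over the compact region swept by the relevant trajectories --- Grönwall's inequality gives $\|D(\bfa)-D(\bfb)\| \le e^{\ell}\|\bfa-\bfb\|$ for any two initial conditions $\bfa,\bfb$. With $L := e^{\ell}$, $\bfa=\widehat\bfz$ and $\bfb=\bfz$, this gives $\|D(\widehat\bfz)-\bfy\|^2 = \|D(\widehat\bfz)-D(\bfz)\|^2 \le L^2\|\delta_{\mathcal{G}}\|^2 \le L^2\epsilon_{\mathcal{G}} =: \epsilon$, which is precisely the defining inequality of a physical perturbation (Definition~\ref{def:PhysicalPerturbation}) with perturbation vector $\delta_{\mathcal{D}} := D(\widehat\bfz)-\bfy$. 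It then remains to verify $D(\widehat\bfz) \in \mathcal{Y}$: the decoder has codomain $\mathcal{Y}$, its defining ODE admits a unique solution from any point of $\mathbb{R}^n$, and $\mathbb{R}^n$ is the support both of $\mathcal{N}(0,I)$ and of the slightly inflated law $\mathcal{N}(0,(1+\sigma^2)I)$ of $\widehat\bfz$, so under the accuracy hypothesis the flow carries this support into $\mathcal{Y}$. Hence every realization of $D(\mathcal{P}_G(\mathcal{E}(\bfy)))$ has the form $\bfy+\delta_{\mathcal{D}}$ with $\delta_{\mathcal{D}}$ admissible, i.e. it belongs to $\mathcal{P}_D(\bfy)$; since $\bfy$ was arbitrary, $D(\mathcal{P}_G(\mathcal{E}(\cdot))) \subseteq \mathcal{P}_D(\cdot)$.

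I expect the main obstacle to be making the Lipschitz bound honest: it requires stating explicitly that the trained velocity $\bfu_{\bftheta}$ is Lipschitz in the state, uniformly on $[0,1]$ and on the relevant bounded region (true for standard smooth architectures on a bounded domain, but it should be an assumption rather than a slogan). A secondary subtlety is that $\widehat\bfz$ is not exactly $\mathcal{N}(0,I)$-distributed; the cleanest fix is to read ``Gaussian space'' through its support $\mathbb{R}^n$, on which $D$ is defined and --- by continuity and the accuracy hypothesis --- maps into a set that contracts to $\mathcal{Y}$ as $\sigma\to 0$, so the inclusion should be read for $\sigma$ small enough that the perturbed latent still decodes inside $\mathcal{Y}$. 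Alternatively, one can sidestep this by replacing the additive latent noise with a resampling that preserves the $\mathcal{N}(0,I)$ law, at the price of a slightly different perturbation model.
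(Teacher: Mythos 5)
Your proposal is correct (modulo the extra Lipschitz hypothesis you explicitly flag) but it takes a genuinely different route from the paper's. The paper argues purely set-theoretically: it writes $D(\mathcal{P}_G(\mathcal{E}(\bfy))) = D(\bfz + \delta_G)$, invokes the closure of Gaussians under addition to claim $\bfz + \delta_G \in \mathcal{G}$, and then uses only the typing $D:\mathcal{G}\rightarrow\mathcal{Y}$ to conclude the decoded point lies in $\mathcal{Y}$, hence is a physical perturbation. It never estimates $\|D(\mathcal{P}_G(\mathcal{E}(\bfy))) - \bfy\|$, so the $\epsilon$-closeness clause of Definition~\ref{def:PhysicalPerturbation} is never actually verified; membership in $\mathcal{Y}$ is all that is checked. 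Your argument instead routes through Lemma~\ref{lemma1} ($D(\mathcal{E}(\bfy))=\bfy$) plus a Gr\"onwall/Lipschitz bound on the reverse-time flow, which buys the explicit estimate $\|D(\widehat{\bfz})-\bfy\|^2 \le L^2\epsilon_{\mathcal{G}}$ and therefore genuinely establishes the defining inequality of a physical perturbation, at the cost of one added (and reasonable, but necessary) assumption: uniform Lipschitzness of $\bfu_{\bftheta}$ in the state over $[0,1]$. You also catch a real soft spot that the paper glosses over: $\bfz+\delta_G \sim \mathcal{N}(0,(1+\sigma^2)I)$ is not $\mathcal{N}(0,I)$, so the paper's step ``$\mathcal{P}_G(\bfz)\in\mathcal{G}$'' only holds if $\mathcal{G}$ is read as the support $\mathbb{R}^n$ rather than literally as in Definition~\ref{def:GaussSpace}; your proposed remedies (small $\sigma$, or resampling to preserve the latent law) address this cleanly. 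In short, your proof is more analytic and more complete on the quantitative side, while the paper's is shorter but rests on the unquantified assertion that the decoder maps the (inflated) Gaussian space into $\mathcal{Y}$.
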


\begin{proof}
Let us propose a perturbation $\cal{P}$ over $\bfy$ as
\begin{eqnarray}
    \cal{P}(\bfy) = D(\cal{P_G}(\cal{E}(\bfy)))
\end{eqnarray}

We know that,
\begin{eqnarray}
    \cal{E}(\bfy) = \bfz.\\
    \implies D(\cal{P_G}(\cal{E}(\bfy))) = D(\cal{P_G}(\bfz)).
\end{eqnarray}

Also, we know that by \Cref{def:GaussPerturbation}
\begin{eqnarray}
    \cal{P_G}(\bfz) = \bfz + \delta_G.
\end{eqnarray}

Let $\cal G$ be an n-dimensional Gaussian space. We see that,
\begin{eqnarray}
    \bfz \in \cal{G} \text{ and } \delta_G \in \cal{G}.
\end{eqnarray}

It is a standard result that the sum of two Gaussian noise vectors is also a Gaussian noise vector \citet{lemons2002introduction}. Therefore,
\begin{eqnarray}
    \cal{P_G}(\bfz) \in \cal{G}.
\end{eqnarray}

We know that $D:\cal{G} \rightarrow \cal{Y}$, from which it follows that 
\begin{eqnarray}
    \cal{P_G}(\bfz) \in \cal{G},\\
    \implies D(\cal{P_G}(\bfz)) \in \cal{Y}.\\
    \implies D(\cal{P_G}(\cal{E}(\bfy))) \in \cal{Y}.\\
    \implies \cal{P}(\bfy) \in \cal{Y}.\\
    \implies \cal{P}(.) \subseteq \cal{P_D}(.).
\end{eqnarray}

Hence, the functional space of proposed perturbations is a valid functional subspace of physical perturbations.

\end{proof}

\newpage
\section{Algorithms} \label{app:pseudocode}
The algorithms underlying our methods are presented below. See \Cref{alg:FMDS} for training the predictive model, \Cref{alg:gaussification} for training the perturbing model, and \Cref{alg:perturb} for generating perturbations using the trained perturbing model. 

\begin{algorithm}[b!]
\caption{FM for training dynamical systems}
\label{alg:FMDS}
\begin{algorithmic}[1]
\REQUIRE
Dataset $\{x_i\}_{i=1}^N$,

lead time $T$ for prediction,

time sampler $p_t$, and

function approximator $\hat{u}_{\theta^{'}}(x, t)$.\newline

\WHILE {not converged}
    \STATE sample minibatch $\{x_i\}_{i=1}^B$ from dataset\;
    \STATE obtain minibatch $\{x_{i+T}\}_{i=1}^B$ from dataset\;
    \STATE sample $t \sim p_t$ (e.g., uniform over $[0, 1]$)\;

    \STATE interpolate points:\\ $x_t^i \gets (1 - t) x_i + t x_{i+T}$ \;
    
    \STATE compute target velocity:\\ $u_t^i \gets x_{i+T} - x_i $ \;

    \STATE predict:\\ $\hat{u}_t^i \gets \hat{u}_{\theta^{'}}(x_t^i, t)$\;

    \STATE compute loss:\\ $\mathcal{L^{'}} \gets \frac{1}{B} \sum_{i=1}^B \|\hat{u}_t^i - u_t^i\|^2$\;

    \STATE update $\theta^{'}$ with gradient descent:\\ ${\theta^{'}}$ $\gets$ ${\theta^{'}}$ - $\eta$ $\nabla_{\theta^{'}} \mathcal{L^{'}}$\;
\ENDWHILE \newline
\RETURN{trained timeseries prediction flow field $\hat{u}_{\theta^{'}}(., .)$ in functional form}
\end{algorithmic}
\end{algorithm}

\begin{algorithm}[b!]
\caption{FM for training a Gaussifying transformation}
\label{alg:gaussification}
\begin{algorithmic}[1]
\REQUIRE
Dataset $\{x_i\}_{i=1}^N$,

base distribution $p_0$ (e.g. $\mathcal{N}(0, I)$),

time sampler $p_t$, and

function approximator $\hat{u}_{\theta^{'}}(x, t)$.\newline

\WHILE {not converged}
    \STATE sample minibatch $\{x_i\}_{i=1}^B$ from dataset\;
    \STATE sample $z_i \sim p_0$ (standard normal)\;
    \STATE sample $t \sim p_t$ (e.g., uniform over $[0, 1]$)\;

    \STATE interpolate points:\\ $x_t^i \gets (1 - t) x_i + t z_i$ \;
    
    \STATE compute target velocity:\\ $u_t^i \gets z_i - x_i $ \;

    \STATE predict:\\ $\hat{u}_t^i \gets \hat{u}_{\theta^{'}}(x_t^i, t)$\;

    \STATE compute loss:\\ $\mathcal{L^{'}} \gets \frac{1}{B} \sum_{i=1}^B \|\hat{u}_t^i - u_t^i\|^2$\;

    \STATE update $\theta^{'}$ with gradient descent:\\ ${\theta^{'}}$ $\gets$ ${\theta^{'}}$ - $\eta$ $\nabla_{\theta^{'}} \mathcal{L^{'}}$\;
\ENDWHILE \newline
\RETURN{trained Gaussifying flow field $\hat{u}_{\theta^{'}}(., .)$ in functional form}
\end{algorithmic}
\end{algorithm}

\begin{algorithm}[h!]
\caption{Generating an ensemble of perturbed states using Gaussifying velocity function and Euler integration}
\label{alg:perturb}
\begin{algorithmic}[1]

\REQUIRE
Initial state $x_0$,

trained Gaussifying flow field function $\hat{u}_{\theta^{'}}(x, t)$,

prediction horizon $T$,

number of steps $N$,

perturbation level $\sigma$, and

number of predictions $M$.\newline

\STATE initialize $x \gets x_0$\;
\STATE initialize $t \gets 0$\;
\STATE set time step $\Delta t \gets T / N$\;
\FOR{$i = 1$ \TO $N$}

    \STATE $u \gets \hat{u}_\theta^{'}(x, t)$\;

    \STATE $x \gets x + \Delta t \cdot u$\;  

    \STATE $t \gets t + \Delta t$\;

\ENDFOR \newline

 % \newline

\ENSURE $t = 1$
\STATE initialize an empty set: $X$\;        
\FOR{$i = 1$ \TO $M$}
    \STATE initialized perturb state:\\ $x'\gets x + \sigma \mathcal{N}(0, I) $\;
    \STATE initialize $t \gets 1$\;
    % \STATE initialize $x \gets x_G$\;
    \FOR{$j = 1$ \TO $N$}
    
        \STATE $u \gets \hat{u}_\theta^{'}(x', t)$\;

        \STATE $x' \gets x' - \Delta t \cdot u$\;

        \STATE $t \gets t -  \Delta t$\;

    \ENDFOR
    
    \STATE append $x'$ to $X$\;
\ENDFOR \newline

\RETURN{predicted set of perturbed states $X$}
\end{algorithmic}
\end{algorithm}

\clearpage
\section{Computational Cost: ODEs vs. SDEs} \label{app:ODEvsSDE} 

\begin{figure}[!h]
\begin{minipage}[t]{0.5\textwidth}
    \centering
    \includegraphics[width=\linewidth]{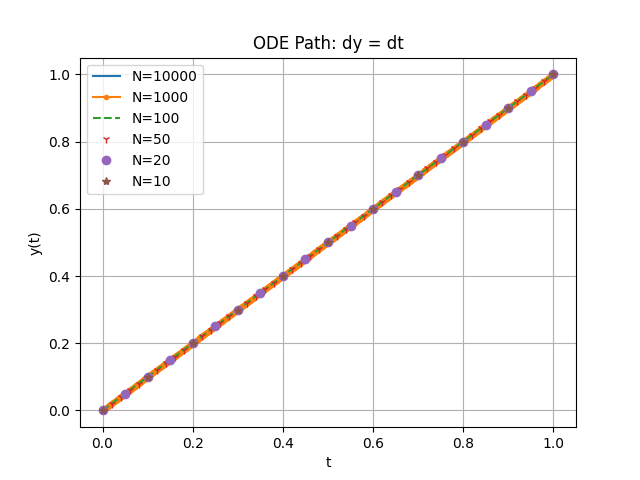}
    \caption{ODE solution for different step sizes.}
    \label{fig:ode} 
\end{minipage}
\hfill
\begin{minipage}[t]{0.5\textwidth}
    \centering
    \includegraphics[width=\linewidth]{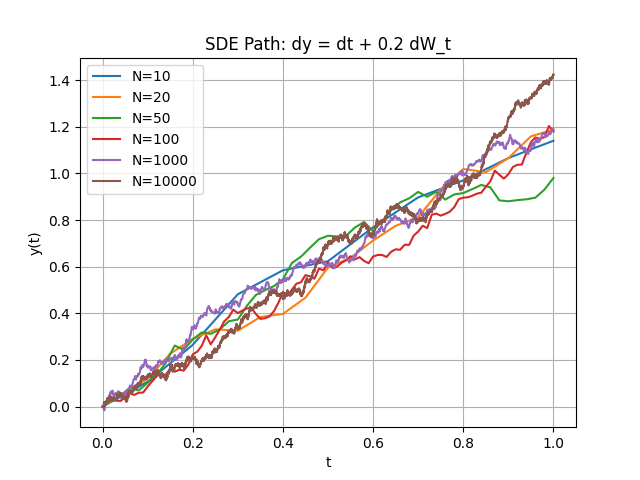}
    \caption{SDE solution for different step sizes.}
    \label{fig:sde}
\end{minipage}
\end{figure}

\subsection{ODE Problem}

We are given a trivial linear initial value problem like FM:
\[
\frac{dy}{dx} = 1, \quad y(0) = 0
\]
The objective is to approximate $y(1)$ using Euler's method.

\subsubsection{Euler's Method Formula}
Euler's method updates the solution using:
\[
y_{n+1} = y_n + h \cdot f(x_n, y_n)
\]
where \( f(x, y) = \frac{dy}{dx} = 1 \), and \( h \) is the step size.

\subsubsection{Step-by-Step Approximation}
Let the interval \([0, 1]\) be divided into \( N \) steps with step size:
\[
h = \frac{1}{N}
\]

Set \( y_0 = 0 \), and iterate:
\[
\text{For } n = 0, 1, 2, \dots, N-1:
\quad y_{n+1} = y_n + h
\]

Since the slope is constant, each step adds \( h \) to the previous value.

After \( N \) steps:
\[
y_N = y_0 + N \cdot h = 0 + N \cdot \frac{1}{N} = 1
\]

\subsection{SDE Problem}

Consider the stochastic differential equation (SDE):
\[
dy = \,dt + 0.2\,dW_t, \quad y(0) = 0
\]
We want to simulate a numerical solution using the Euler-Maruyama method.

\subsubsection{Euler--Maruyama Method}

Given a time interval \([0, 1]\) and a number of steps \(N\), define the step size:
\[
h = \frac{1}{N}
\]
Let \(W_t\) denote the Wiener process. Then the Euler-Maruyama approximation is given by:
\[
y_{n+1} = y_n + h + 0.2 \Delta W_n
\]
where \(\Delta W_n \sim \mathcal{N}(0, h)\) is a normally distributed random variable with mean 0 and variance \(h\).

\subsection{Comparison}

\begin{table}[!h]
\begin{minipage}[t]{0.5\textwidth}
\centering
\begin{tabular}{llllc}
 \hline
Type & N & \# Operations & \# Fn Calls & Runtime (s) \\\hline
 ODE & 1 & 2 & 1 & 6.6e-5\\
 ODE & 10 & 20 & 10 & 8.0e-5\\
 ODE & 100 & 200 & 100 & 1.0e-4\\
 ODE & 1000 & 2000 & 1000 & 4.6e-4\\
 SDE & 10 & 40 & 20 & 1.2e-4\\
 SDE & 100 & 400 & 200 & 6.4e-4\\
 SDE & 1000 & 4000 & 2000 & 2.2e-3 \\\hline
\end{tabular}
\caption{Cost comparison for ODE/SDE solutions.}
\label{tab:ODEvSDE}
\end{minipage}
\hfill
% \begin{table}[!h]
\begin{minipage}[t]{0.5\textwidth}
\centering
\begin{tabular}{lccc}
 \hline
& \# Function  & \# Function & Total\\
& Calls  & Calls & Function\\
Type & for perturbation & for prediction & Calls\\ \hline
 Our & 1 & 1 & 2\\
 SOTA & 0 & 200 & 200\\ \hline
\end{tabular}
\caption{Cost comparison for generating one ensemble member using SOTA solution vs our solution for the case in \Cref{fig:summary}. }
\label{tab:ODEvSDE2}
\end{minipage}
\end{table}

\begin{figure}[!h]
        \centering
        \includegraphics[width=\linewidth]{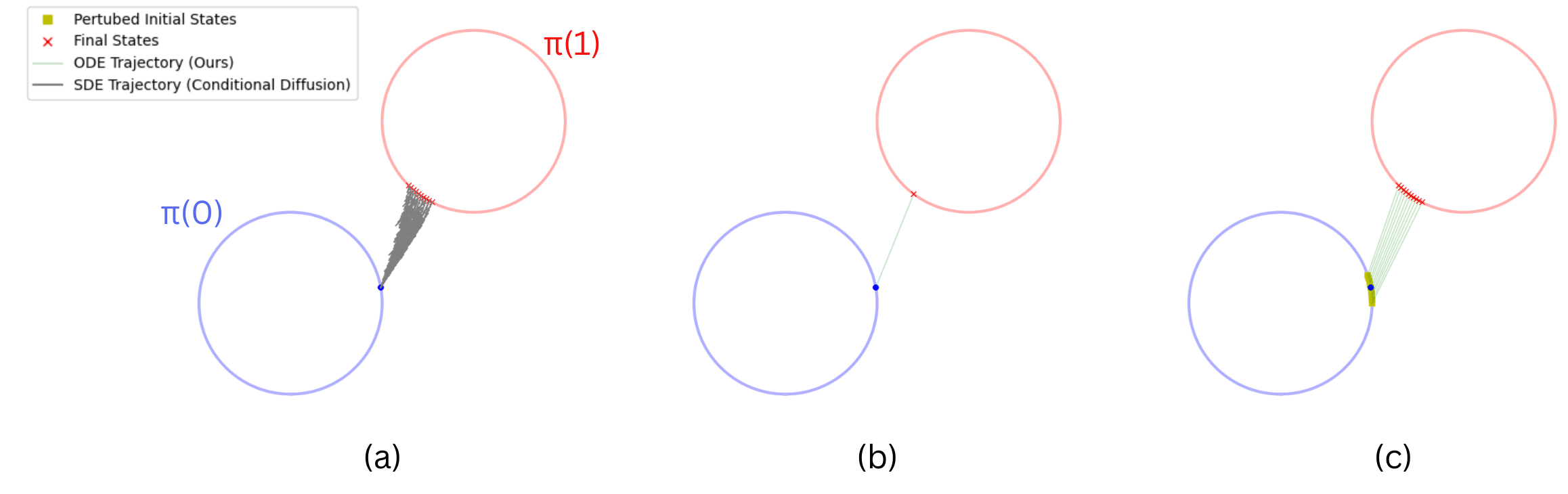}
        \caption{Flow trajectories for a state (blue dot) from the distribution density $\pi(0)$ to another distribution density $\pi(1)$ for inference of (a) five (5) member ensembles using SDE-based 200 step conditional diffusion model, (b) only deterministic prediction and (c) our physical-perturbation based five (5) member ensemble prediction using single step FM.}
        \label{fig:summary}
\end{figure}

As shown in \Cref{fig:ode}, the numerical solution is robust to the choice of discretization. From a theoretical perspective, for constant-slope ODEs such as our FM, using $N = 1$ is sufficient to obtain an accurate solution. In contrast, \Cref{fig:sde} shows that the solutions obtained with different step sizes vary significantly. Additionally, the error accumulation for the numerical integration of ODEs is reduced \cite{liu2022flow}. In practice, $N = 100$ is the simplest setting for solving SDEs. The typical order of $N$ is in thousands. As shown in \Cref{tab:ODEvSDE}, the method achieves a speedup of up to $30\times$. Therefore, at the scale of high-dimensional operations and large function evaluations, a speed of $\cal{O}$(10) is easily viable.

\clearpage
\section{Comparison Metrics} \label{app:metrics} 

\subsection{Continuous Ranked Probability Score (CRPS)}

\begin{equation}
\mathrm{CRPS} (y, x_1,\dots, x_M)
=
\frac{1}{|\Omega|}
\sum_{s \in \Omega}
\left[
\frac{1}{M} \sum_{j=1}^{M} |x_j(s) - y(s)|
-
\frac{1}{2M^2} \sum_{j=1}^{M} \sum_{k=1}^{M} |x_j(s) - x_k(s)|
\right]
\end{equation}

where
\begin{itemize}
    \item $\Omega$ denotes the spatial domain (e.g., grid points/pixels),
    \item $|\Omega|$ is the number of spatial locations,
    \item $s \in \Omega$ indexes a spatial location,
    \item $y(s)$ is the observed value at location $s$,
    \item $x_j(s)$ is the $j$-th ensemble member at location $s$,
    \item $M$ is the ensemble size.
\end{itemize}

\subsection{Ensemble Mean and Ensemble Standard Deviation}  
Let $S = {I_1,..,I_N}$ be a set of images, $I_i \in \cal{R}^{C\times H \times W}$ such that $i,j,k,N,C,H,W \in \mathbb{N}$, $i\leq N$, $j\leq C$, $k\leq H$, and $ l \leq W$. 
An image $I_i$ is defined as 
\[
I_i = \{ P_i^{j,k,l} \in \mathbb{R} ~|~ \text{j} \leq C, \text{k} \leq H, \text{l} \leq W\}, 
\]
where $P_i^{j,k,l}$ is called a pixel in an image $I_i$.\\

Ensemble mean state (image), $I_{EM} \in \mathbb{R}^{C\times H \times W }$, is defined as

\begin{equation}
I_{EM} = \{ P_{EM}^{j,k,l} = \frac{1}{N} \sum_{i=1}^{N} P_i^{j,k,l} | P_i^{j,k,l} \in I_i\}.
\end{equation}

Ensemble mean score $V_{EM}$ is defined as
\begin{equation}
V_{EM} = \frac{1}{N\cdot C \cdot H \cdot W} \sum_{i=1}^{N} \sum_{j=1}^{C} \sum_{k=1}^{H} \sum_{l=1}^{W} P_i^{j,k,l},
\end{equation}
where $ P_i^{j,k,l} \in I_i$.\\

Ensemble standard deviation state (image), $I_{ES} \in \mathbb{R}^{C\times H \times W }$, is defined as
\begin{multline}
I_{ES} = \Biggl\{ P_{ES}^{j,k,l} = \sqrt{\frac{1}{N} \sum_{i=1}^{N} (P_i^{j,k,l} - P_{EM}^{j,k,l})^2} \\ \Bigg| P_i^{j,k,l} \in I_i, P_{EM}^{j,k,l} \in I_{EM} \Biggl\}.
\end{multline}

Ensemble variance score $V_{ES}$ is defined as
\begin{equation}
V_{ES} = \frac{1}{N\cdot C \cdot H \cdot W} \sum_{i=1}^{N} \sum_{j=1}^{C} \sum_{k=1}^{H} \sum_{l=1}^{W} (P_i^{j,k,l} - V_{EM})^2,
\end{equation}
where $ P_i^{j,k,l} \in I_i$.

\subsection{MSE, MAE, SSIM}

\begin{equation}
\text{MSE} = \frac{1}{N\cdot C \cdot H \cdot W} \sum_{i=1}^{N} \sum_{h=1}^{H} \sum_{w=1}^{W} \sum_{c=1}^{C} (y - \hat{y})^2
\end{equation}

\begin{equation}
\text{MAE} = \frac{1}{N\cdot C \cdot H \cdot W} \sum_{i=1}^{N} \sum_{h=1}^{H} \sum_{w=1}^{W} \sum_{c=1}^{C} |y - \hat{y}|
\end{equation}

\begin{equation}
\text{SSIM(x,y)} = \frac{(2 \mu_x \mu_y + C_1)(2 \sigma_{xy} + C_2)}{(\mu_x^2 + \mu_y^2 + C_1)(\sigma_x^2 + \sigma_y^2 + C_2)}  
\end{equation}
\begin{equation}
\overline{\text{SSIM}} = \frac{1}{N} \sum_{i=1}^{N} \text{SSIM}(x,y)
\end{equation}

where:
\begin{align*}
N & \text{ is the number of images in the dataset,} \\
H & \text{ is the height of the images,} \\
W & \text{ is the width of the images,} \\
C & \text{ is the number of channels (e.g., 3 for RGB images),} \\
y & \text{ is the true pixel value at position } (i, h, w, c), \text{ and} \\
\hat{y} & \text{ is the predicted pixel value at position } (i, h, w, c).\\
\text{MAX} & \text{ is the maximum possible pixel value of the image,} \\
 & \text{(e.g., 255 for an 8-bit image),} \\
\text{MSE} & \text{ is the Mean Squared Error between the original}\\
 & \text{and compressed image.}\\
\mu_x & \text{ is the average of } x, \\
\mu_y & \text{ is the average of } y, \\
\sigma_x^2 & \text{ is the variance of } x, \\
\sigma_y^2 & \text{ is the variance of } y, \\
\sigma_{xy} & \text{ is the covariance of } x \text{ and } y, \\
C_1 & = (K_1 L)^2 \text{ and } C_2 = (K_2 L)^2 \text{ are two variable} \\
 & \text{to stabilize the division when the denominator is near zero,} \\
L & \text{ is the dynamic range of the pixel values} \\
 & \text{(typically, this is 255 for 8-bit images),} \\
K_1 & \text{ and } K_2 \text{ are small constants }\\
 & \text{(typically, } K_1 = 0.01 \text{ and } K_2 = 0.03).\\
\end{align*}

\newpage
\section{Dataset Configurations} \label{app:datasetstats}
\Cref{tab:datasetstats} describes the statistics, train-test splits, image sequences used for modelling, and frame resolutions.

\begin{table}[!h]
\centering
\begin{tabular}{lccccc}
\hline
{Dataset} & \( N_{\text{train}} \) & \( N_{\text{test}} \) & \( (C, H, W) \) & History & Prediction \\
\hline
Predator-Prey           & 10,000                  & 500                & (1, 1, 2)     &    1   & 1       \\
Vancouver93            & 9,935                  & 2,484                & (93, 1, 1)     & 5 (5 days)      & 5 (5 days)       \\
Moving MNIST           & 10,000                  & 10,000                & (1, 64, 64)     & 10      & 10       \\
CloudCast        & 52,560                  & 17,520                 & (1, 128, 128)   & 4 (1 Hr)       & 4 (1 Hr)       \\
WeatherBench (5.625$^{\circ}$)        & 324,336                  & 17,520                 & (48, 32, 64)   & 8 (48 Hrs)       & 8 (48 Hrs)       \\
\hline
\end{tabular}
\caption{Datasets statistics, training and testing splits, image sequences, and resolutions}
\label{tab:datasetstats}
\end{table}

\begin{table}[!h]
\centering
\begin{tabular}{clccc}
\hline
Type & Variable Name & Abbrev. & ECMWF ID  & Levels \\
\hline
 & Land-sea mask & LSM & 172 \\
Static & Orography & OROG & 228002\\
 & Soil Type & SLT & 43\\
\hline
 & 2 metre temperature & T2m & 167 \\
Single & 10 metre U wind component & U10 & 165 \\
 & 10 metre V wind component & V10 & 166 \\
\hline
 & Geopotential & Z & 129 & 50, 250, 500, 600, 700, 850, 925 \\
 & U wind component & U & 131 & 50, 250, 500, 600, 700, 850, 925 \\
Atmospheric & V wind component & V & 132 & 50, 250, 500, 600, 700, 850, 925 \\
 & Temperature & T & 130 & 50, 250, 500, 600, 700, 850, 925 \\
 & Specific humidity & Q & 133 & 50, 250, 500, 600, 700, 850, 925 \\
 & Relative humidity & R & 157 & 50, 250, 500, 600, 700, 850, 925 \\
\hline
\end{tabular}
\caption{Variables considered for global weather prediction model.}
\label{tab:wB_config}
\end{table}

\subsection{WeatherBench} \label{app:WBvars}
The original ERA5 dataset \cite{era5hersbach} has a resolution of 721$\times$1440, with hourly recordings spanning nearly 40 years from 1979 to 2018. The data cover 37 vertical levels on a 0.25$^\circ$ latitude–longitude grid. The raw data is too voluminous for efficient experimentation, even with powerful computing resources. We therefore use the typically used reduced resolutions (32$\times$64: 5.625$^\circ$ latitude-longitude grid, 128$\times$256: 5.625$^\circ$ latitude-longitude grid) as per the transformations made by  \cite{weatherBench}. We, however, stick to using the configuration set by \cite{nguyen2023climax} for standard comparison. The prediction model considers 6 atmospheric variables at 7 vertical levels, 3 surface variables, and 3 constant fields, resulting in 48 input channels for predicting four target variables: geopotential at 500hPa (Z500), temperature at 850hPa (T850) and 2 meters above ground (T2m), and zonal wind speed at 10 meters above ground (U10). These variables are standard in medium-range NWP models (e.g., Integrated Forecasting System \cite{wedi2015modelling}) and widely used as benchmarks in deep learning work \cite{graphcast2023,pathak2022fourcastnet}. We use a leap period of 6 hours as a single timestep, and hence our model takes in 8 timesteps (48 hours or 2 days) to predict for the next 8 timesteps (48 hours or 2 days). Under the same setup, we used data from 1979 to 2015 for training, 2016 for validation, and 2017–2018 for testing. The details of the variables  considered are in \Cref{tab:wB_config}.

\subsection{Additional Datasets} \label{app:addDatasets}
\begin{itemize}
    \item \textbf{Vancouver 93 Temperature Trend (Vancouver93)}: This is a real nonlinear chaotic high-dimensional dynamical system, in which only a single physical variable (temperature) is recorded.  The daily average temperatures at 93 different weather stations in and around Vancouver, BC, Canada, are captured for the past 34 years from sources such as the National Oceanic and Atmospheric Administration (NOAA), the Government of Canada, and Germany's national meteorological service (DWD) through Meteostat's Python library \cite{lamprechtmeteostat}.  Essentially, it is a time series of 12,419 records at 93 stations. The complex urban coastal area of Vancouver, with coasts, mountains, valleys, and islands, makes it a location where forecasting is notably more difficult than in general \cite{vannini2012making, leroyer2014subkilometer}. Historical temperature data is insufficient for prediction. Accurate forecasting requires high-resolution variables, including precipitation and pressure.
    
This makes the dataset well-suited for our proposed framework.
  \item \textbf{CloudCast} \cite{nielsen2020cloudcast}: a real physical nonlinear chaotic spatiotemporal dynamical system. The dataset comprises 70,080 satellite images at 128 $\times$ 128 pixel resolution (15 $\times$15 km), capturing 11 cloud types across multiple atmospheric levels. Images are pixel-level annotated and collected every 15 minutes from January 2017 to December 2018.
\end{itemize}

\subsubsection{Results} 

\begin{table}[t]
\centering
\begin{tabular}{lcccc}
\hline
 & {True}& {Our} & {True}& {Our}  \\
{Data}                 & {Mean}& {Mean} & {Std. Dev.}& {Std. Dev.}  \\\hline
Vancouver93            & 1.85e1 & 1.89e1      & 4.05  & 3.97     \\
CloudCast              & 1.53e-2 & 1.51e-2     & 9.84e-2  & 9.84e-2          \\
\hline
\end{tabular}
\caption{Comparison of mean and standard deviation of ensemble predictions.}
\label{tab:AddstatsMetrics}
\end{table}

\begin{table}[t]
\begin{minipage}[t]{0.49\textwidth}
\centering
\begin{tabular}{lcccc}
\hline
 {Data}                & {MSE}& {MAE} & {SSIM}\\\hline
Vancouver93            & 5.9e-1 & 6.1e-1          & NA  \\
CloudCast              & 2.0e-7 & 6.0e-4          & 0.99  \\
\hline
\end{tabular}
\caption{Similarity metrics for ensemble average.}
\label{tab:AddsimilarityMetricsMean}
% \end{table}
\end{minipage}
\hfill
% \begin{table}[t]
\begin{minipage}[t]{0.49\textwidth}
\centering
\begin{tabular}{lcccc}
\hline
 {Data}                & {MSE}& {MAE} & {SSIM}\\\hline
Vancouver93            & 1.8e-1 & 3.4e-1          & NA  \\
CloudCast              & 4.9e-7 & 6.0e-4          & 0.85  \\
\hline
\end{tabular}
\caption{Similarity metrics for ensemble standard deviation.}
\label{tab:AddsimilarityMetricsStd}
\end{minipage}
\end{table}

 \begin{figure}[b]
 \begin{minipage}[t]{0.49\textwidth}
    \centering
    \includegraphics[width=\linewidth]{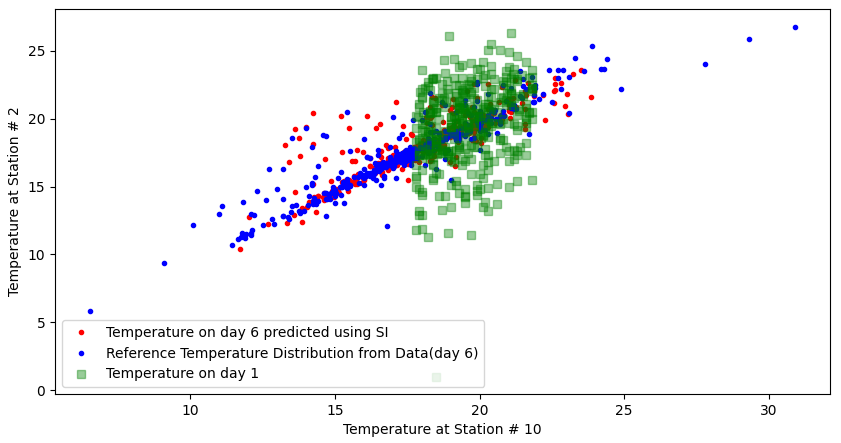}
    \caption{Phase diagram showing initial state and final states from data and model using FM after 5 days between station 2 and station 10.}
    \label{fig:unconditionalVancouver93_5days}
\end{minipage}
\hfill
\begin{minipage}[t]{0.49\textwidth}
    \centering
    \includegraphics[width=\linewidth]{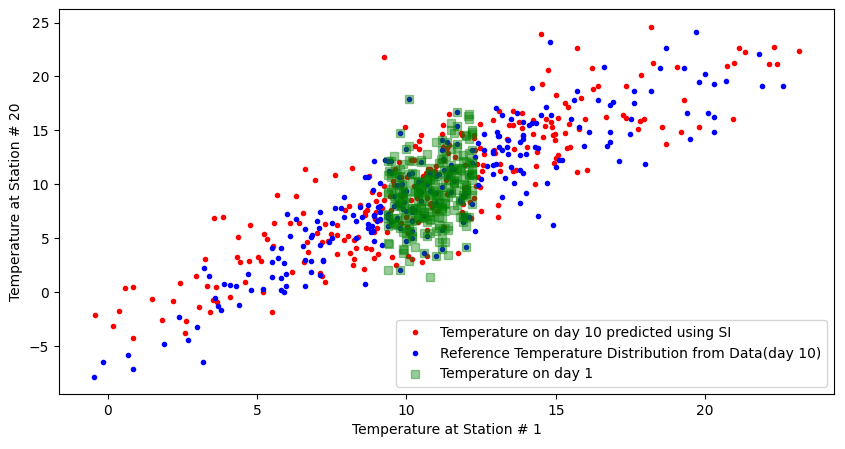}
    \caption{Phase diagram showing initial state and final states from data and model using FM after 10 days between station 20 and station 1.}
    \label{fig:unconditionalVancouver93_10days}
\end{minipage}
\end{figure}

\paragraph{Probabilistic Forecasting For Vancouver Temperature:} \label{Van93results}
We now use the Vancouver93 dataset to build a forecasting model using FM. The approximating function we use is a residual-network-based TCN inspired by \cite{bai2018empirical} along with time embedding on each layer. In this case, we propose a model that  uses a sequence of the last 5 days' temperatures to predict the next 5 days' temperatures.  For this problem, it is easy to visualize how FM learns the transportation from one distribution to another.

In the first experiment using this dataset, we select all test-set cases where the temperature at Station 1 is 20$^\circ$C and compare them with the empirical distribution of temperatures at the same stations 10 days later. Phase diagrams between any two stations reveal that final distributions from FM closely match those obtained from test data after 10 days. As shown in \Cref{fig:unconditionalVancouver93_5days}, these distributions exhibit strong skewness and closely align with the empirical state distribution at the 5-day lead time. Similarly, \Cref{fig:unconditionalVancouver93_10days} shows the distribution matching very well with the state after 10 days. Appendix \ref{app:van93stats} shows the matching histograms of the distributions for the latter case. Some metrics for comparison of results can be seen in \Cref{tab:AddstatsMetrics,tab:AddsimilarityMetricsMean,tab:AddsimilarityMetricsStd}.    

\begin{landscape}
\section{Perturbations using SI: Additional Study} \label{app:MM} 

\begin{figure*}[!h]
    \centering
    \includegraphics[width=\linewidth]{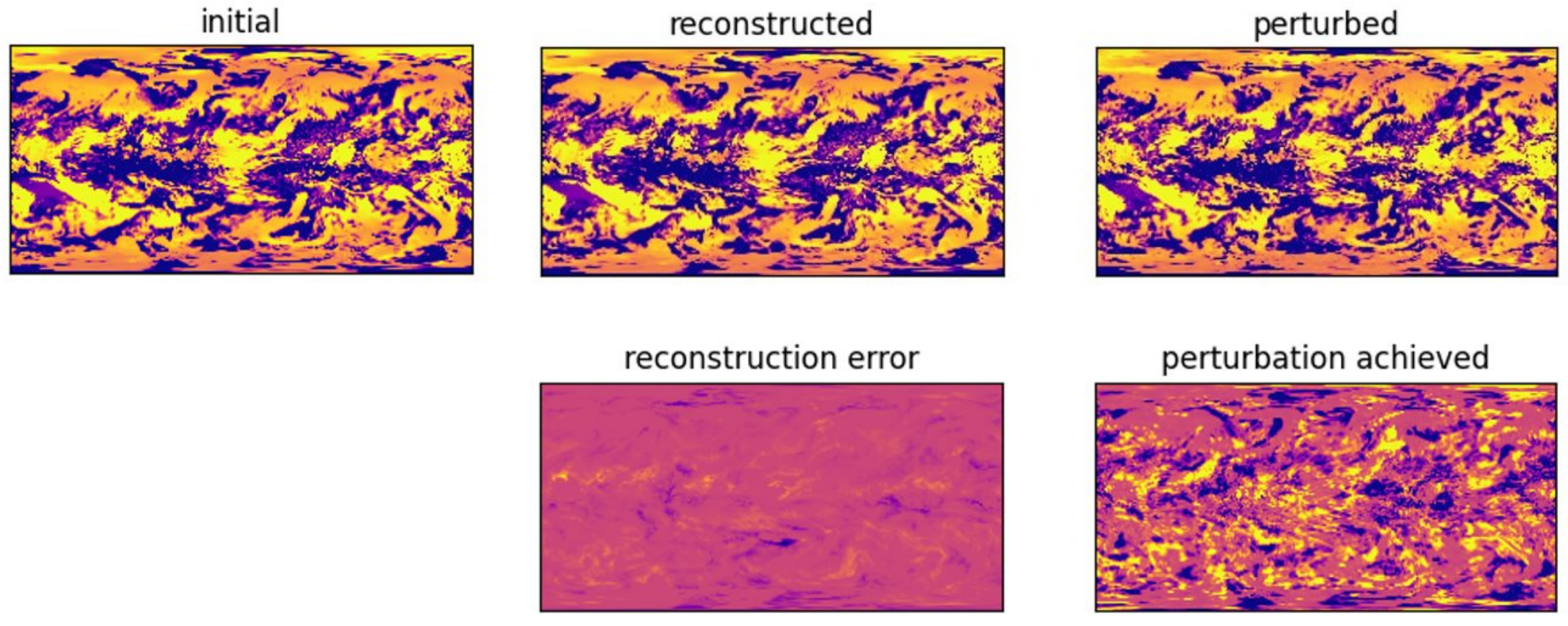}
    \caption{Physical perturbation of the clouds in the whole atmosphere at a single random altitude extracted from WeatherBench at 1.40625$^\circ$ resolution, using our technique.}
    \label{fig:wB140625}
\end{figure*}

\clearpage

\begin{figure}[!h]
\begin{minipage}[t]{0.49\linewidth}
    \centering
    \includegraphics[width=\linewidth]{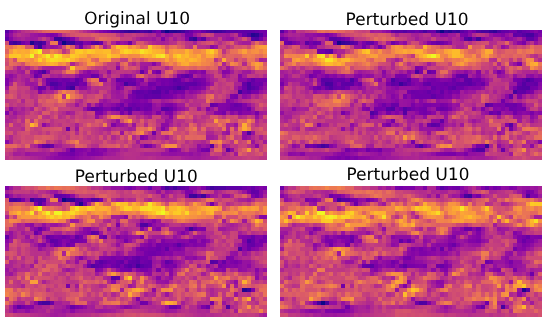}
    \caption{Perturbed states of a WeatherBench U10 state.}
    \label{fig:wBperturbedU10}
\end{minipage}
\hfill
\begin{minipage}[t]{0.49\linewidth}
    \centering
    \includegraphics[width=\linewidth]{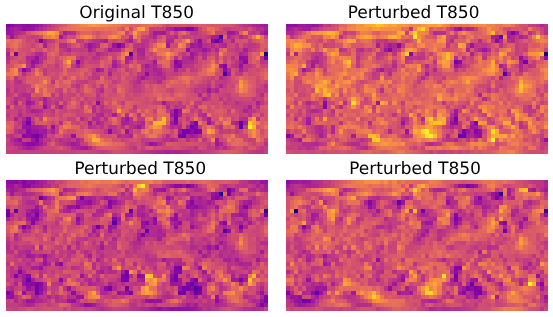}
    \caption{Perturbed states of a WeatherBench T850 state.}
    \label{fig:wBperturbedT850}
\end{minipage}
\end{figure}

\begin{figure}[!h]
    \centering
    \includegraphics[width=0.5\linewidth]{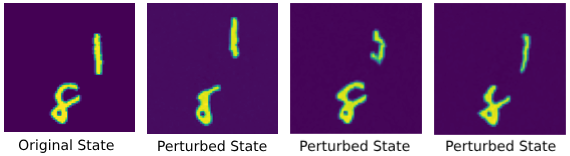}
    \caption{Perturbed states of a Moving MNIST sample state.}
    \label{fig:MMperturbed}
\end{figure}

\paragraph{Perturbation of Physical States using FM:}
\label{exPhysicalPerturbation}
Perturbation of a physical state in a dynamical system refers to a deliberate natural deviation in a system's parameters or state variables. Using the algorithm proposed in \Cref{sec:GeneratePerturbation}, we generate perturbed states for MovingMNIST and WeatherBench. All the generated perturbed states are physically realistic. Some sample perturbed states from MovingMNIST are shown in \Cref{fig:MMperturbed}. Physical disturbances such as digit bending, deformation, position changes, and shape variations are clearly captured. Similarly, we generated perturbed states for WeatherBench as well. Some samples of perturbed states from WeatherBench's U10 and T850 variables (as these are more interpretable) can be seen in \Cref{fig:wBperturbedU10,fig:wBperturbedT850}, respectively. As can be seen, the perturbations are very much physical.

\end{landscape}

\subsection{Ablation Study}\label{sec:ablation}

We further evaluate the model by applying non-Gaussian perturbations to the Gaussian latent state and examining the resulting physically plausible outputs. We consider two cases: one with perturbation amplitude $\sigma = 0.2$ and another with $\sigma = 0.5$. Reconstruction metrics indicate how well the model learns the transformation without any perturbation to the latent state. An effective perturbation introduces small variations while maintaining visual similarity to the input state. The model is then tested with three different types of perturbations of the latent state. Firstly, with a normal random noise, then with uniform random noise, and finally with a constant offset noise. \Cref{tab:diffperturbation} shows that normal perturbation works as expected; however, quantitative metrics alone are insufficient. The images provide clear evidence that the model responds specifically to Gaussian perturbations, consistent with our theoretical framework (see \Cref{fig:ablation_perturb}).  

\begin{table}[!b]
\begin{minipage}[!t]{0.48\linewidth}
            \centering
            \begin{tabular}{l|cccccc}
            \hline
            Type of Noise & Reconst. & Normal & Uniform & Constant \\
            \hline 
            $\sigma = 0.2$\\
            MSE & 4.5e-3 & 9.7e-3 & 4.2e-2 & 1.5e-1 \\
            SSIM & 0.942 & 0.871 & 0.154 & 0.057\\
            \hline 
            $\sigma = 0.5$\\
            MSE & 4.5e-3 & 2.8e-2 & 1.7e-1 & 1.0e0 \\
            SSIM & 0.942 & 0.246 & 0.067 & 0.025\\
            \hline
        \end{tabular}
        \caption{Influence of different perturbations on generating physical perturbations.}
        \label{tab:diffperturbation}
% \end{table}
\end{minipage} \hfill
\begin{minipage}[!t]{0.48\linewidth}
% \begin{figure}[!t]
    \centering
    \includegraphics[width=\linewidth]{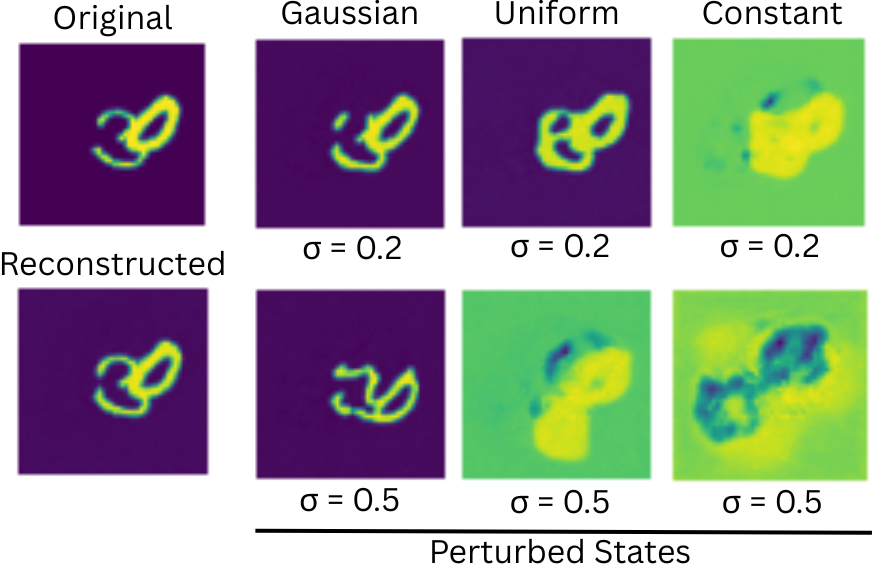}
    \caption{Perturbed states of a moving MNIST state by the addition of Gaussian, uniform, and constant noises. }
    \label{fig:ablation_perturb}
\end{minipage}
\end{table}

\begin{figure}[!b]
    \centering
    \includegraphics[width=\linewidth]{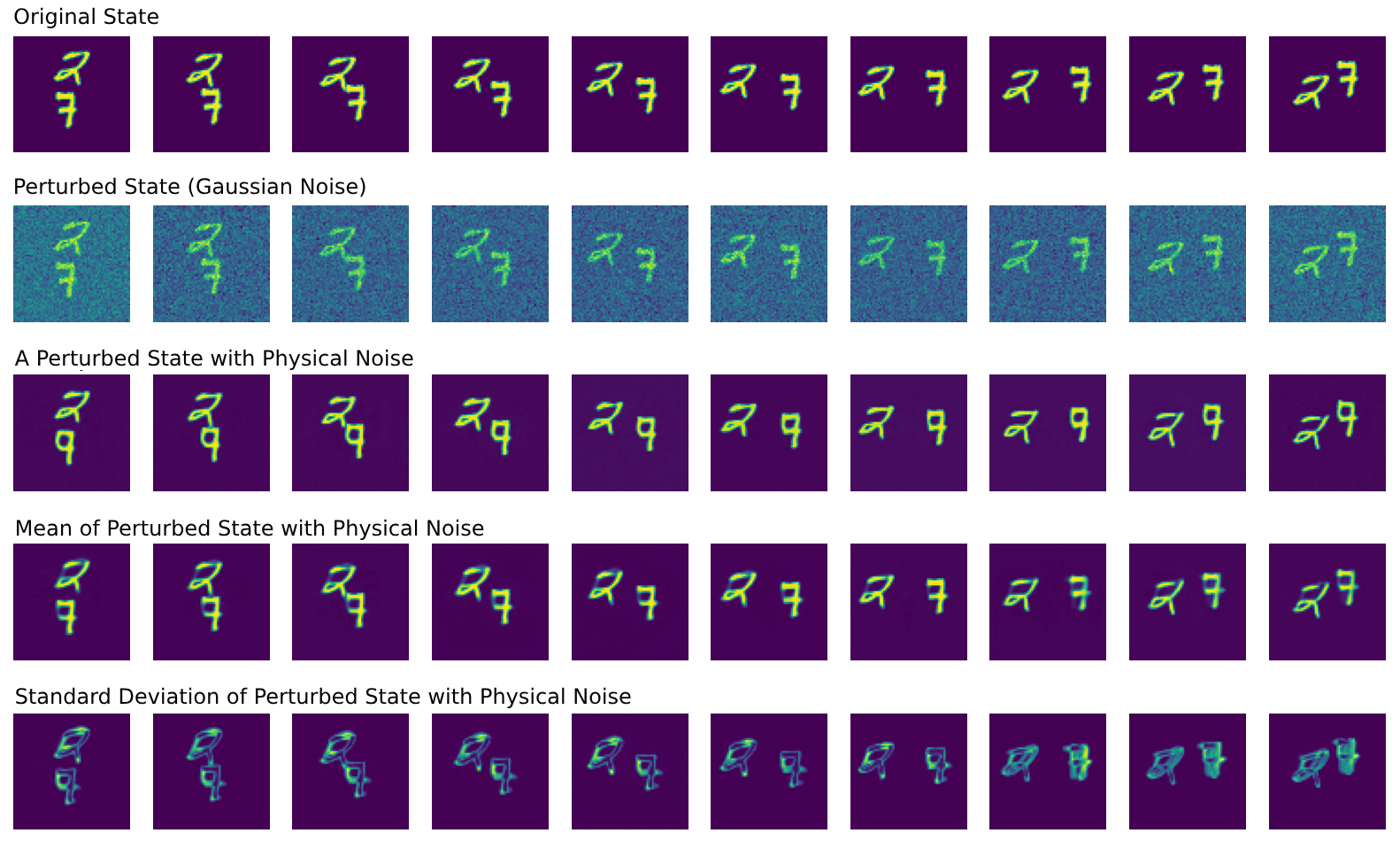}
    \caption{Statistical comparison of 100 perturbed states from a single MovingMNIST sequence using FM.}
    \label{fig:MMPertStats}
\end{figure}
        
\subsection{MovingMNIST} 
\Cref{fig:MMPertStats} shows how the perturbed state generated using FM is better than Gaussian perturbations. \Cref{fig:MMPertStats} illustrates an important phenomenon: under mild perturbations, the model can transition between digit classes, as shown in the third row. Despite this, under mild perturbations, the mean of 100 sampled trajectories closely matches the original state. The corresponding standard deviation captures the extent of the perturbations. \Cref{fig:MMPertvsNoise} shows how a perturbed state varies with different noise levels. With higher noise levels, the digits undergo topological changes, transitioning between classes (e.g., 'one' morphing into 'four'). The model captures the topological changes required for digits to transition into other classes.  

\begin{figure}[t]
    \centering
    \includegraphics[width=\linewidth]{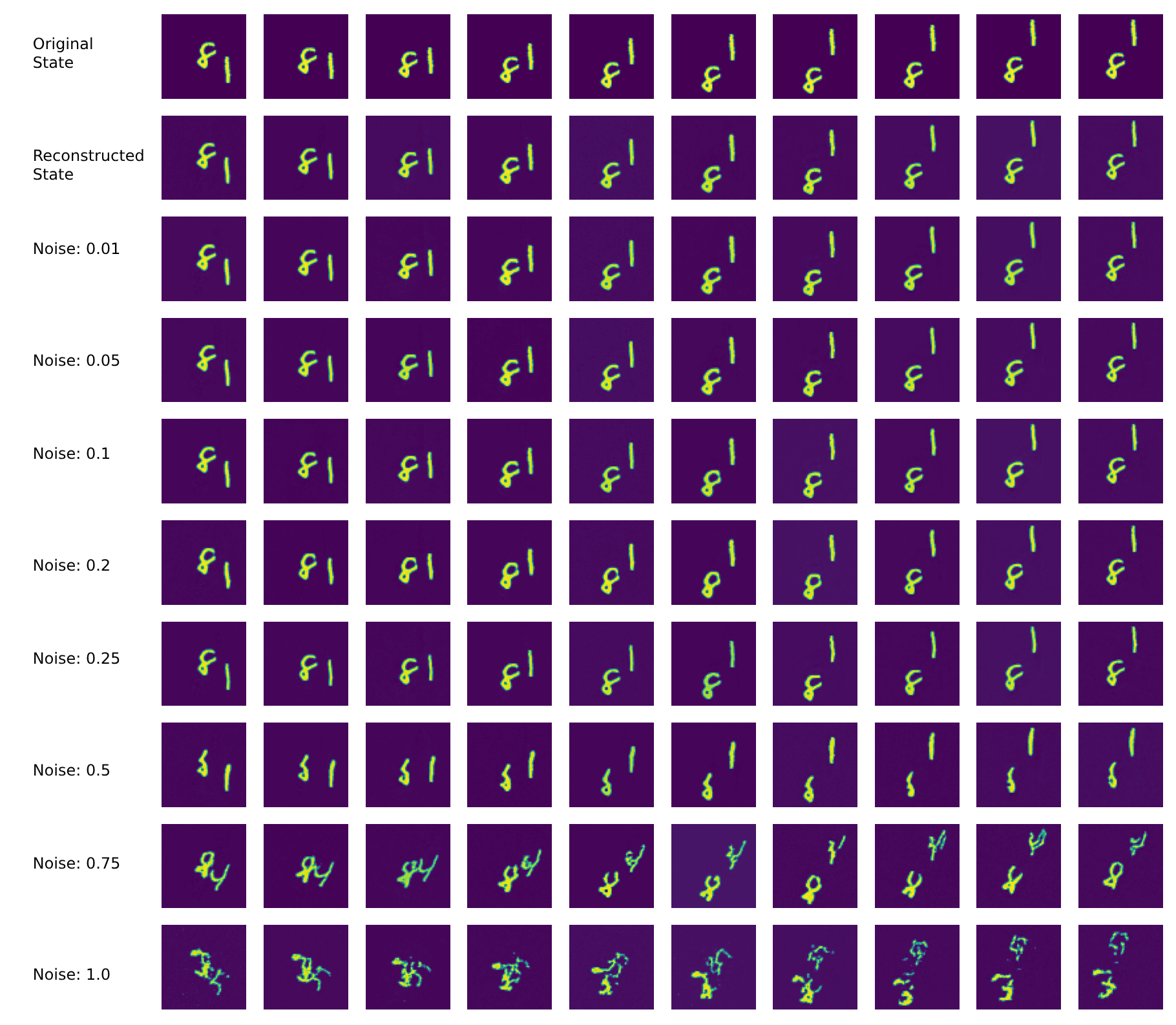}
    \caption{Perturbed states for a single MovingMNIST sequence using FM for different levels of noise.}
    \label{fig:MMPertvsNoise}
\end{figure}

\clearpage
\subsection{WeatherBench} 
\Cref{fig:wBPertvsNoise} shows the four curated important variables for weather prediction, Z500, T2m, U10, and T850, perturbed with different values of noise.

\begin{figure}[!b]
    \centering
    \includegraphics[width=\linewidth]{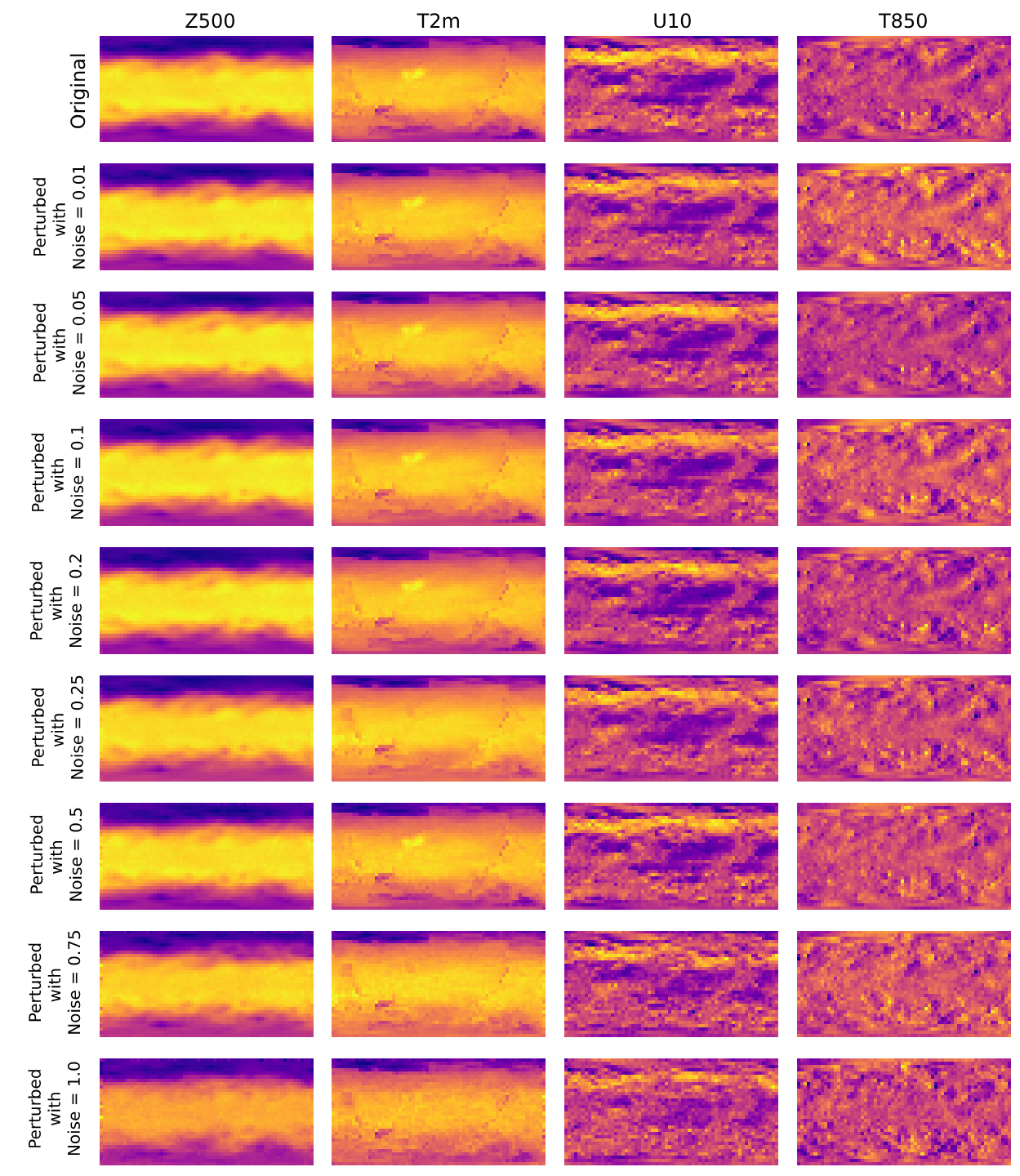}
    \caption{Perturbed states for a WeatherBench state using FM for different levels of noise.}
    \label{fig:wBPertvsNoise}
\end{figure}

\clearpage
\section{Statistical Comparison of Our Ensemble Predictions} 
\subsection{Predator-Prey Model} \label{app:predpreystats}
\Cref{fig:hist_predprey} shows the histograms of $\bfy_1$ and $\bfy_2$ respectively. The histogram of the actual state variable closely matches the histogram from FM predictions.

\begin{figure}[!h]
        \centering
        \begin{tabular}{cc}
        \includegraphics[width=0.50\linewidth]{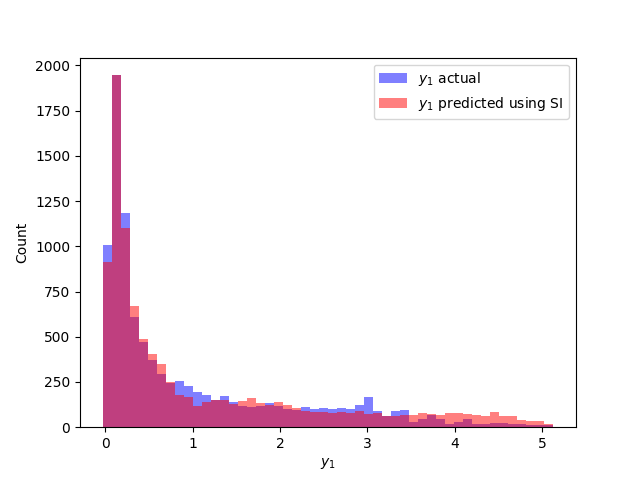} &
        \includegraphics[width=0.52\linewidth]{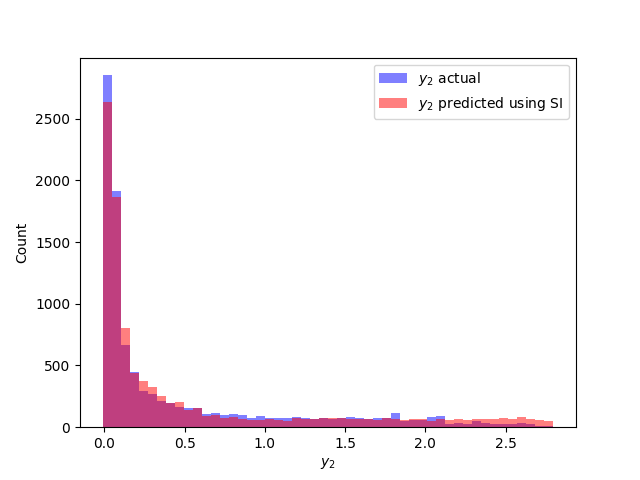}
        \end{tabular}
        \caption{Histograms of actual final distribution of $\bfy_1$ (left) and $\bfy_2$ (right) compared with that obtained using FM on the predator-prey model}
        \label{fig:hist_predprey}
\end{figure}

\subsection{Vacouver93} \label{app:van93stats}
\Cref{fig:unconditionalVancouver93station1Hist} shows the histograms of the two distributions, which are very similar, suggesting that FM efficiently learns the transport map to the distribution of temperatures at a station for this problem, whose deterministic solution is computationally demanding. 

\begin{figure}[!b]
    \centering
    \includegraphics[width=0.8\linewidth]{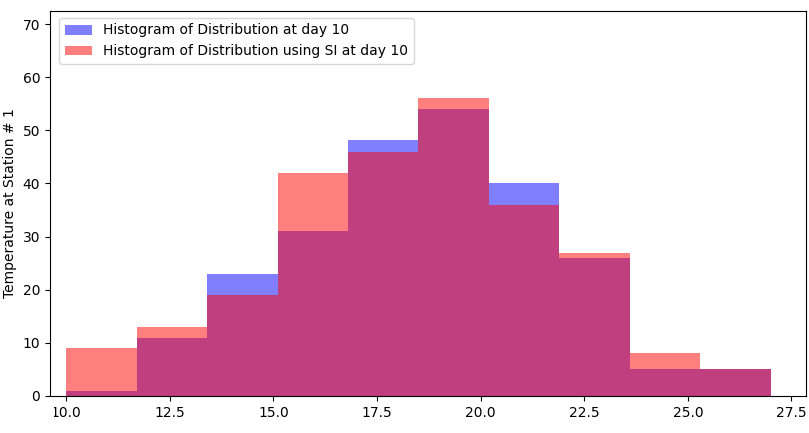}
    \caption{Histograms of distributions observed after 10 days and distribution obtained using FM excluding outliers.}
    \label{fig:unconditionalVancouver93station1Hist}
\end{figure}

\subsection{Moving MNIST} \label{app:MMstats}

\begin{figure}[!b]
    \centering
    \includegraphics[width=1.0\linewidth]{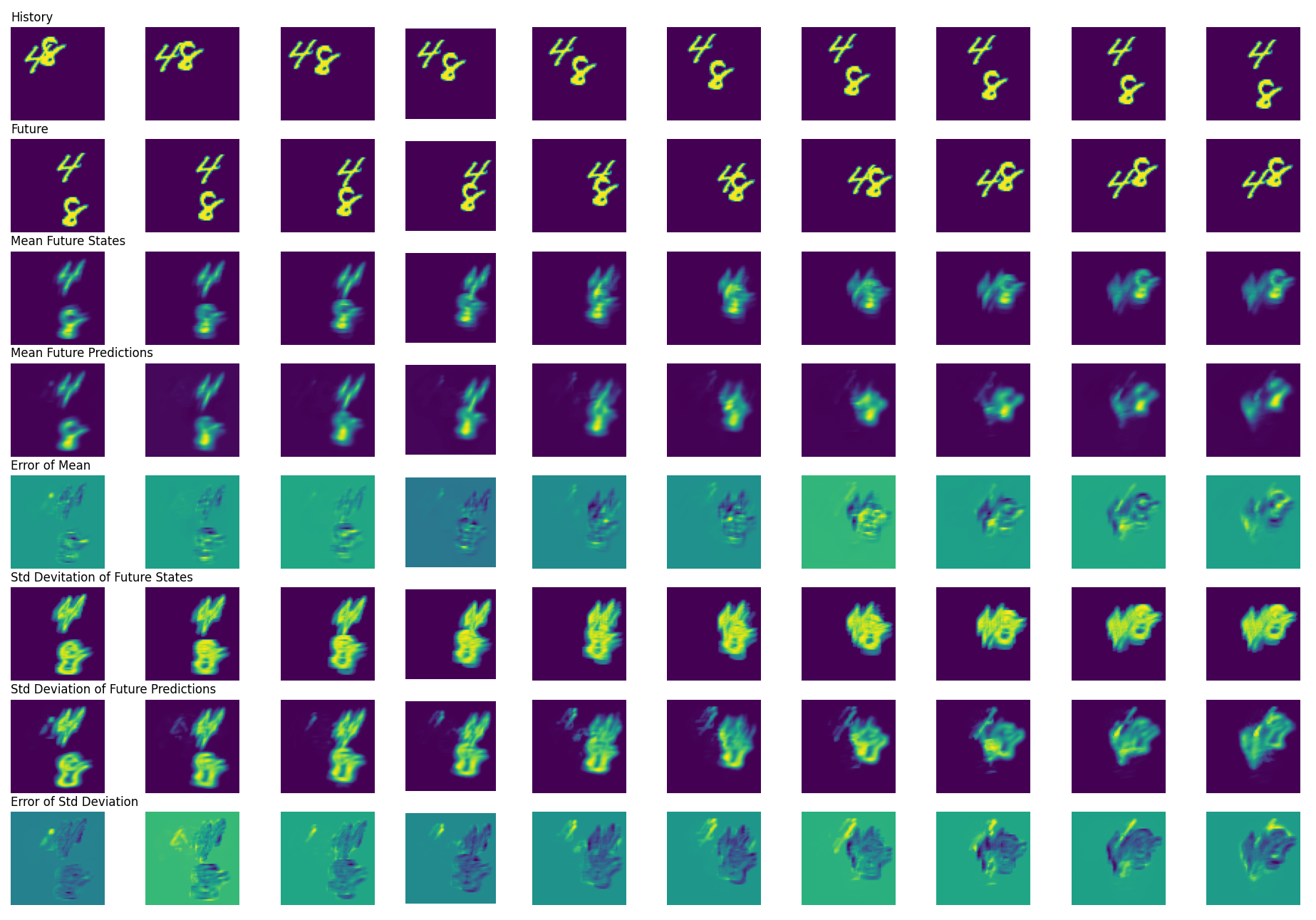}
    \caption{Statistical comparison of Moving MNIST sequences predicted using FM with random perturbed initial states.}
    \label{fig:MMLowPert}
\end{figure}

\begin{figure}[!t]
    \centering
    \includegraphics[width=1.0\linewidth]{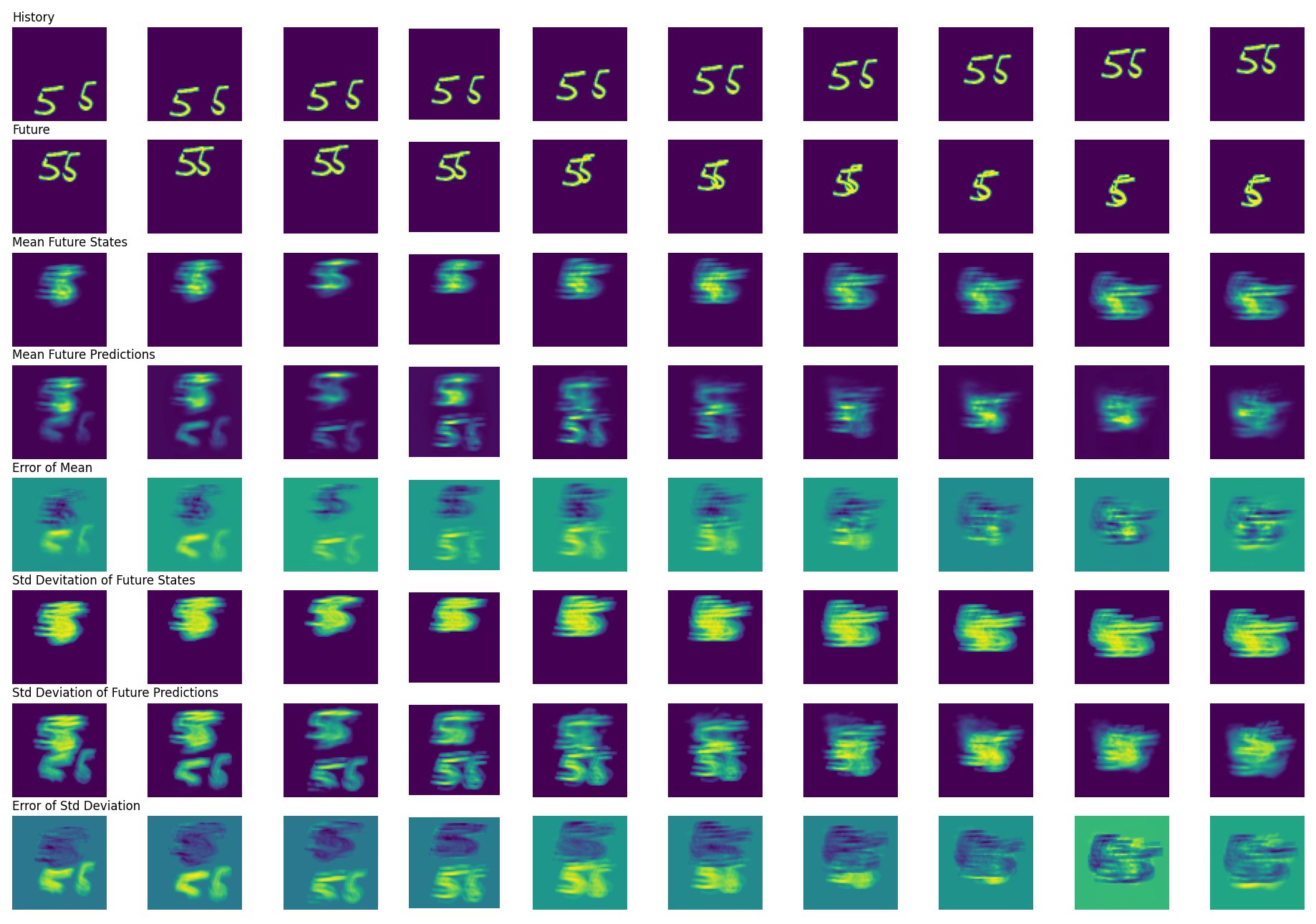}
    \caption{Statistical comparison of Moving MNIST sequence with random large perturbed initial states.}
    \label{fig:MMLargePert}
\end{figure}

 Consider the initial 10 frames from a case in the Moving MNIST test set as history, and the actual 10 subsequent frames in the sequence as future. For such a case, \Cref{fig:MMLowPert,fig:MMLargePert} show statistical images for small perturbation and large perturbation, respectively. 100 perturbed samples are taken, and their final states are predicted using SI. The ensemble mean and standard deviation closely align with the ground truth, as shown in the figures. This validates that the model effectively captures the uncertainty inherent in the Moving MNIST dataset. 

\clearpage
\subsection{Cloudcast}\label{app:CCstats}
\subsubsection{Predictions}
We use 4-timeframe sequences from the CloudCast test set to predict the next 4 timeframes at 15-minute intervals. \Cref{fig:cloudcastensembleupdatepredictions} shows twelve ensemble predictions. While predictions are nearly identical at a 1-hour lead time, fine-scale differences become apparent at higher resolution. \Cref{fig:cloudcastensemblepredictions} demonstrates that these differences amplify significantly over longer horizons: at a 3-hour lead time, cloud patches show distinct differences in shape and size. This sensitivity reflects the chaotic nature of cloud dynamics, where small imperceptible differences at early times can produce substantially different outcomes within hours.   

\begin{figure*}[!b]
    \centering
    \includegraphics[width=1.0\linewidth]{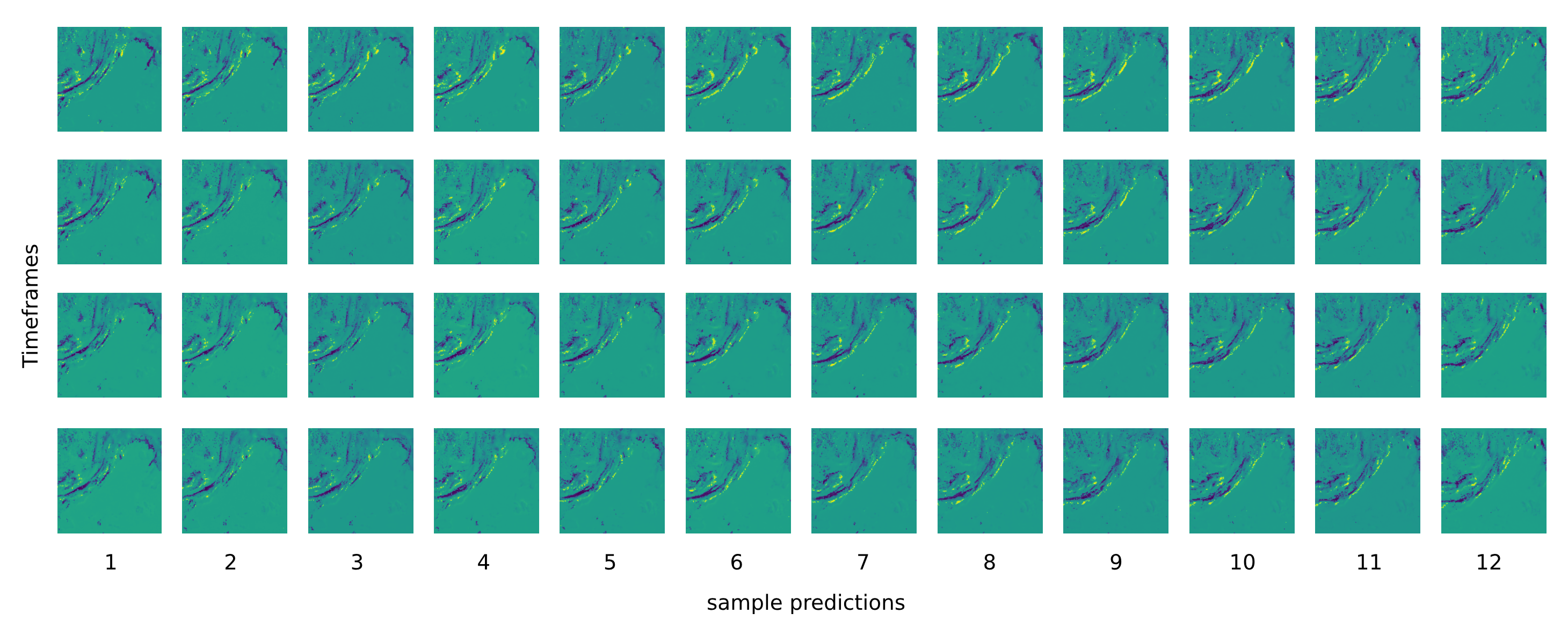}
    \caption{Random predictions of differential change from a single sequence from the cloudcast testset.}
    \label{fig:cloudcastensembleupdatepredictions}
\end{figure*}

\begin{figure*}[!b]
    \centering
    \includegraphics[width=1.0\linewidth]{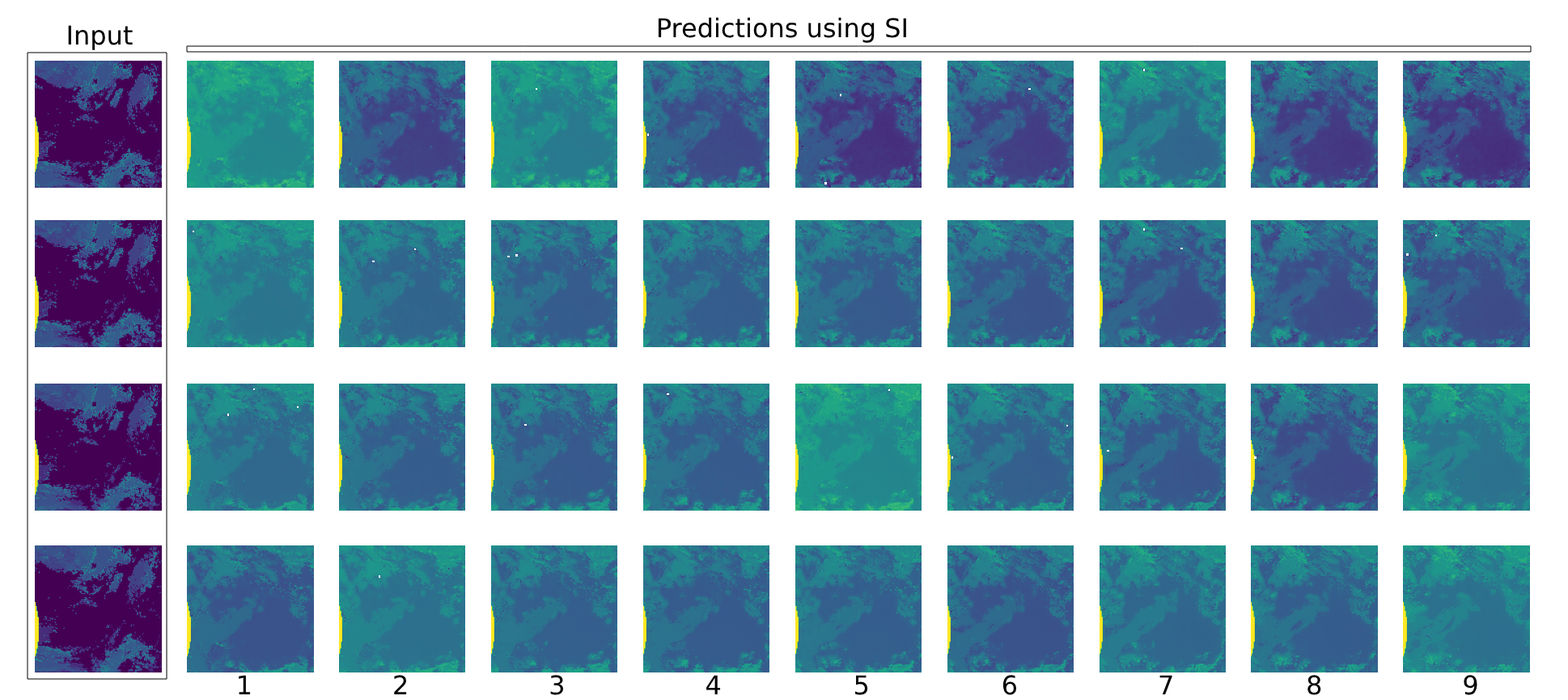}
    \caption{Random predictions from a single sequence from CloudCast testset for prediction after 3 hours.}
    \label{fig:cloudcastensemblepredictions}
\end{figure*}

\subsubsection{Statistical comparison}
\Cref{fig:CCPertStats} shows statistical images for generated outputs on the CloudCast testset using SI. Similar samples are taken as the set of initial states to predict their final states using SI. The figures demonstrate that the ensemble mean and standard deviation closely match the ground truth. 

\begin{figure}[!b]
    \centering
    \includegraphics[width=0.76\linewidth]{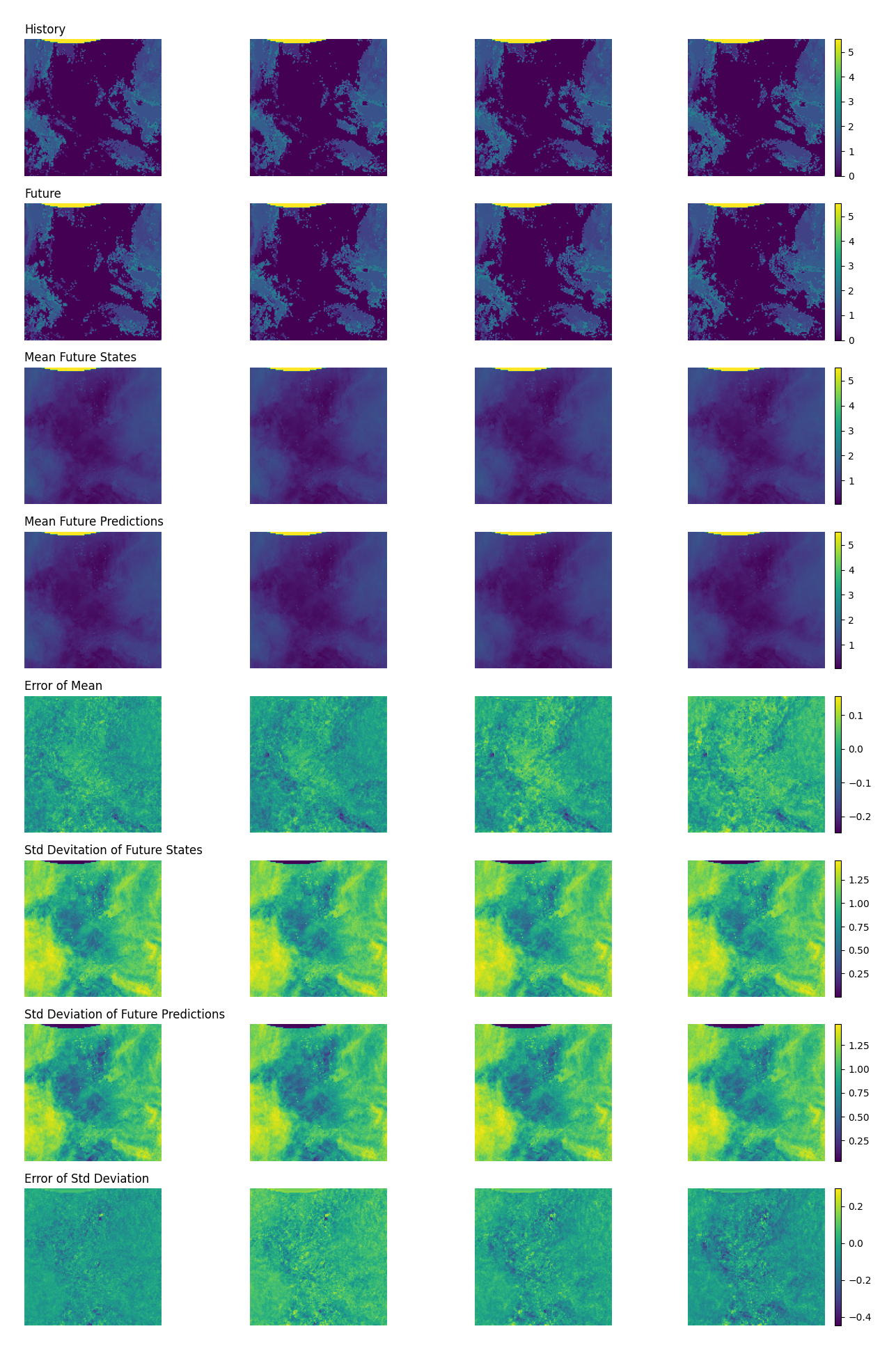}
    \caption{Statistical comparison of CloudCast sequences predicted using FM for a single initial state.}
    \label{fig:CCPertStats}
\end{figure}

\clearpage
\begin{landscape}
\subsection{WeatherBench}\label{app:wBstats}
\subsubsection{Predictions}
\Cref{fig:wBsample20forecasts} shows some of the predictions using FM with mild perturbations.

\begin{figure}[!h]
    \centering
    \includegraphics[width=1.05\linewidth]{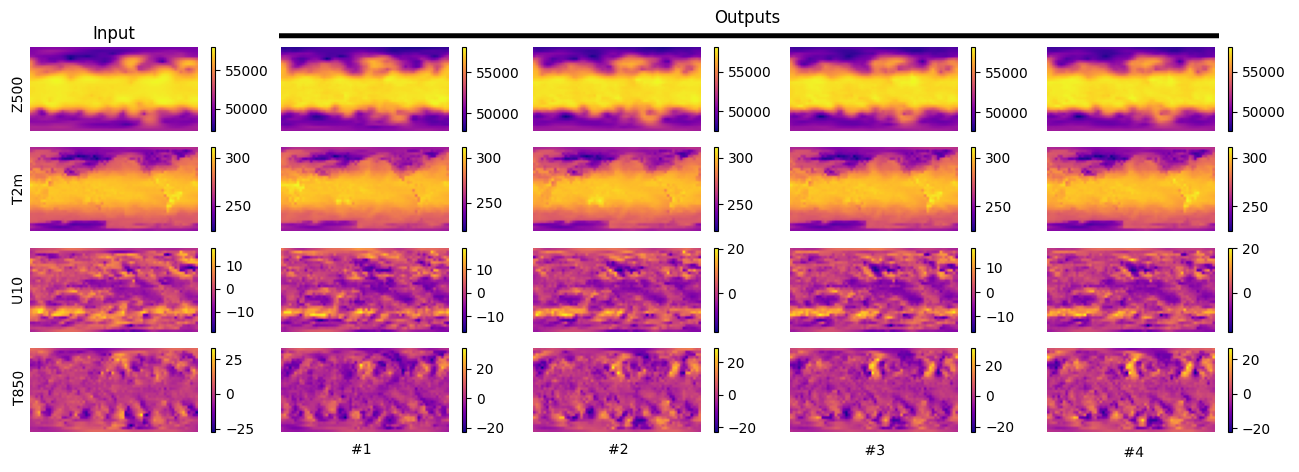}
    \caption{Four sample stochastic forecasts of Z500, T2m, U10 and T850 after 2 days obtained using SI.}
    \label{fig:wBsample20forecasts}
\end{figure}

\clearpage

\subsubsection{Statistical comparison}
\Cref{fig:wBstats20forecasts} shows the statistical comparison of ensemble predictions with 20 mildly perturbed states. \Cref{fig:wBstats78forecasts} shows the statistical comparison of ensemble predictions with 78 strongly perturbed states. The mean and standard deviation of the states for 2 days (48-hour predictions) can be easily compared. \Cref{tab:similarityMetricsWB6hrs,tab:similarityMetricsWB2days} show the metrics to compare the accuracy of our ensemble mean and ensemble standard deviation for 6-hour and 2-day predictions, respectively. 

\begin{figure}[!b]
    \centering
    \includegraphics[width=1.05\linewidth]{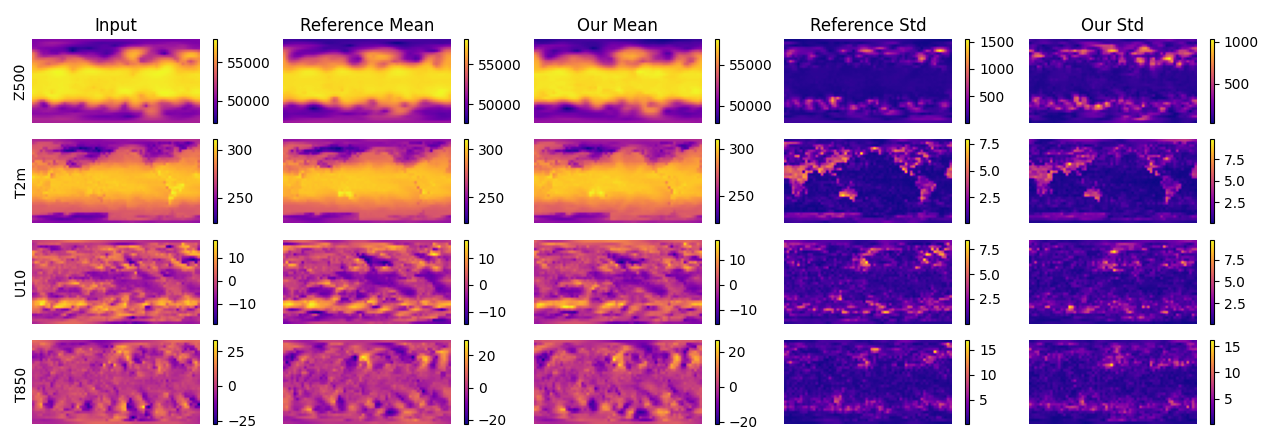}
    \caption{Statistical comparisons of Z500, T2m, U10, and T850 for 2-day ensemble forecasting using FM using 20 mildly perturbed samples.}
    \label{fig:wBstats20forecasts}
\end{figure}

\begin{figure}[t]
    \centering
    \includegraphics[width=1.05\linewidth]{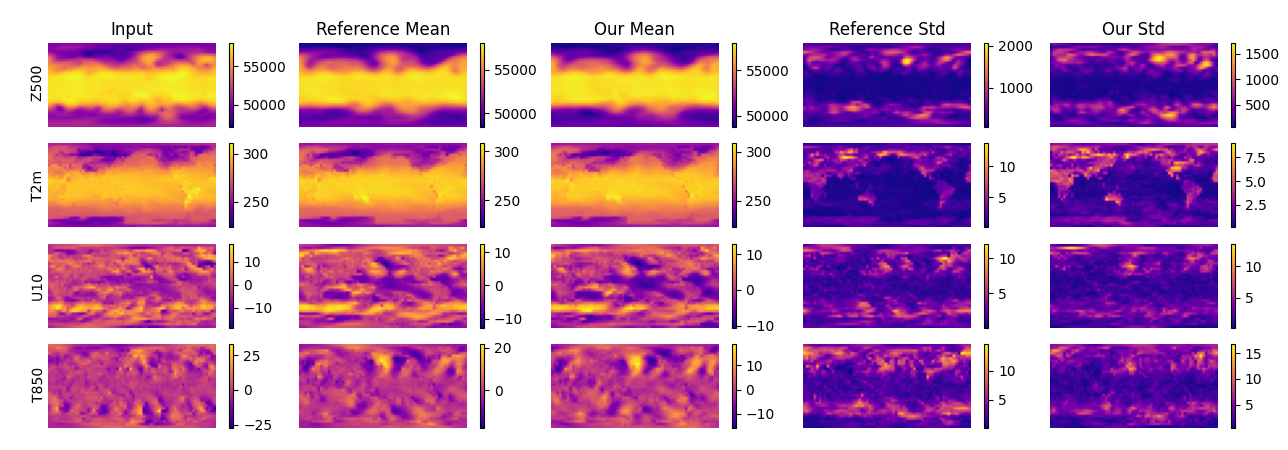}
    \caption{Statistical comparisons of Z500, T2m, U10, and T850 for 2-day ensemble forecasting using FM using 78 strongly perturbed samples.}
    \label{fig:wBstats78forecasts}
\end{figure}

\begin{table}[t]
\centering
\begin{tabular}{l|c|ccccc|ccccc}
\hline
& & \multicolumn{5}{c}{Ensemble Mean} & \multicolumn{5}{|c}{Ensemble Std. Dev.} \\ \cline{3-12}
{Variable} & CRPS &   True Score    &   Our Score    &   MSE($\downarrow$)   &   MAE($\downarrow$)   &   SSIM($\uparrow$)   &   True Score  &  Our Score     &   MSE($\downarrow$)    &   MAE($\downarrow$)    &   SSIM($\uparrow$)    \\ \hline
Z500       & 6.98e1 & 5.40e4   &  5.40e4  &  2.00e4 &  1.04e2    &  0.992    & 3.96e2 & 3.74e2 &  1.83e4
&  9.16e1    &  0.850 \\
T2m        & 7.51e-1 &   2.78e2  &  2.78e2  &  2.62   &  9.33e-1    &   0.986  & 1.78 & 1.84   &       8.90e-1 &  5.60e-1   &  0.745\\
U10        & 6.93e-1 &  -1.85e-1 & -2.70e-1 &  1.27   &  8.40e-1   &    0.886  & 2.51  & 2.34  &        1.12   & 7.35e-1 &  0.708 \\
T850       & 7.02e-1 &  -2.43e-2  & -3.74e-2 &  1.62  &  9.79e-1  &    0.812  & 3.75   & 3.58   &       1.29  &  8.26e-1    & 0.774\\
\hline
\end{tabular}
\caption{Similarity metrics for weatherBench ensemble prediction after 6 hours.}
\label{tab:similarityMetricsWB6hrs}
\end{table}

\begin{table}[t]
\centering
\begin{tabular}{l|c|ccccc|ccccc}
\hline
& & \multicolumn{5}{c}{Ensemble Mean} & \multicolumn{5}{|c}{Ensemble Std. Dev.} \\ \cline{3-12}
{Variable} & CRPS &   True Score    &   Our Score    &   MSE($\downarrow$)   &   MAE($\downarrow$)   &   SSIM($\uparrow$)   &   True Score  &  Our Score     &   MSE($\downarrow$)    &   MAE($\downarrow$)    &   SSIM($\uparrow$)    \\ \hline
Z500       & 1.57e2 &  5.40e4   &  5.40e4  &  7.86e4 &  2.04e2    &  0.978    & 3.58e2 & 3.78e2 &       4.52e4 &  1.41e2    &  0.630\\
T2m        & 7.10e-1 &   2.78e2  &  2.78e2  &  3.05   &  1.06    &    0.986  & 1.83   & 1.95   &       1.54   &  7.3e-1    &  0.681\\
U10        & 7.95e-1 &  -1.16e-1 & -1.10e-1 &  2.42   &  1.17    &    0.820  & 2.42   & 2.58   &        2.19   & 1.04    &  0.561\\
T850       & 9.08e-1 & -8.09e-2  & -8.12e-2 &  4.30   &  1.53    &    0.688 & 3.64   & 3.72   &       3.12   &  1.28    & 0.561\\
\hline
\end{tabular}
\caption{Similarity metrics for weatherBench ensemble prediction after 2 days.}
\label{tab:similarityMetricsWB2days}
\end{table}

\end{landscape}

\clearpage

\section{Hyperparameter Settings and Computational Resources}
\subsection{UNet Training}
\Cref{tab:hyperparameters_mnistCC} shows the hyperparameter settings for training on MovingMNIST and CloudCast datasets. While \Cref{tab:hyperparameters_wB} shows the settings for training on WeatherBench.

\label{UNetsettings}
\begin{table}[!h]
\centering
\begin{tabular}{lcccc}
\hline
{Hyperparameter}     & {Symbol} & {Value}  \\ \hline
Learning Rate               & $\eta$          & $1e-4$         \\
Batch Size                  & $B$             & $64$           \\
Number of Epochs            & $N$             & $200$           \\
Optimizer                   & -               & Adam            \\
Dropout                     & -               & 0.1             \\
Number of Attention Heads   & -               & 4             \\
Number of Residual Blocks   & -               & 2             \\
\hline
\end{tabular}
\caption{Neural Network Hyperparameters for training Moving MNIST and CloudCast}
\label{tab:hyperparameters_mnistCC}
\end{table}

\label{UNetsettingswB}
\begin{table}[!h]
\centering
\begin{tabular}{lcccc}
\hline
{Hyperparameter}     & {Symbol} & {Value}  \\ \hline
Learning Rate               & $\eta$          & $1e-4$         \\
Batch Size                  & $B$             & $8$           \\
Number of Epochs            & $N$             & $50$           \\
Optimizer                   & -               & Adam            \\
Dropout                     & -               & 0.1             \\
Number of Attention Heads   & -               & 4             \\
Number of Residual Blocks   & -               & 2             \\
\hline
\end{tabular}
\caption{Neural Network Hyperparameters for training WeatherBench}
\label{tab:hyperparameters_wB}
\end{table}

\subsection{Computational Resources} 
\label{resources}
All our experiments are conducted using an NVIDIA RTX-A6000 GPU with 48GB of memory. 

\subsection{Software Resources} 
\label{resources}
All our experiments are conducted using Python 3.10.12 with CUDA-enabled PyTorch library running on the Ubuntu 20.04 LTS operating system.

\end{document}